\newtheorem{theorem}{Theorem}
\newtheorem{lemma}{Lemma}
\newtheorem{definition}[theorem]{Definition}
\begin{document}

\title{Best Arm Identification in Bandits with \\
Limited Precision Sampling}

\author{%
  \IEEEauthorblockN{Kota Srinivas Reddy\IEEEauthorrefmark{1}, P. N. Karthik\IEEEauthorrefmark{2}, Nikhil Karamchandani\IEEEauthorrefmark{3}, and Jayakrishnan Nair\IEEEauthorrefmark{3}}
  \IEEEauthorblockA{\IEEEauthorrefmark{1}Indian Institute of Technology Madras, \IEEEauthorrefmark{2} National University of Singapore, \IEEEauthorrefmark{3} Indian Institute of Technology Bombay \\
Emails: ksvr1532@gmail.com, karthik@nus.edu.sg, nikhilk@ee.iitb.ac.in, jayakrishnan.nair@ee.iitb.ac.in}
}

\maketitle

\begin{abstract}
    We study best arm identification in a variant of the multi-armed bandit problem where the learner has limited precision in arm selection. The learner can only sample arms via certain exploration bundles, which we refer to as boxes. In particular, at each sampling epoch, the learner selects a box, which in turn causes an arm to get pulled as per a box-specific probability distribution. The pulled arm and its instantaneous reward are revealed to the learner, whose goal is to find the best arm by minimising the expected stopping time, subject to an upper bound on the error probability. We present an asymptotic lower bound on the expected stopping time, which holds as the error probability vanishes. We show that the optimal allocation suggested by the lower bound is, in general, non-unique and therefore challenging to track. We propose a modified tracking-based algorithm to handle non-unique optimal allocations, and demonstrate that it is asymptotically optimal. We also present non-asymptotic lower and upper bounds on the stopping time in the simpler setting when the arms accessible from one box do not overlap with those of others.

\end{abstract}

\section{Introduction}
\label{sec:introduction}

In this paper, we study best arm identification in a multi-armed bandit setting with $K$ arms, where the learner has limited precision in sampling arms. In particular, the learner cannot directly sample individual arms, but can instead sample only certain exploration bundles, which we refer to as {\em boxes}. Each box is associated with a probability distribution over the arms;
upon selecting a box, an arm is pulled randomly according to its corresponding probability distribution. The learner sees the pulled arm and its instantaneous reward. The learner's goal is to find the {\em best arm}, defined as the arm with the largest mean reward, while  minimising the expected stopping time subject to an upper bound on the error probability (i.e., {\em fixed-confidence} regime). For the {\em boxed-bandit best arm identification} problem described above, our objective is to design sound algorithms and benchmark their performance against information theoretic lower bounds, when the arm reward distributions and the arm selection probabilities of the various boxes are apriori unknown.

\subsection{Motivation}

The key feature of our model is that the learner does not have direct access to the arms. Instead, it must perform its exploration via certain intermediaries (boxes), which have their own preferences/biases/constraints over arm selection. To consider a contemporary example, suppose the goal of the learner is to identify the most contagious strain of a virus/pathogen in a large community (say a country), by ordering tests at different local testing facilities. Each testing facility in turn performs its tests by sampling individuals in its local vicinity/jurisdiction; the likelihood of encountering different strains being a function of the facility location.

Another interpretation of our model is that it captures \emph{noise} in arm selection. For example, when the learner attempts to pull a certain arm, the pull is only executed successfully with, say probability~$1-\eta;$ with probability~$\eta,$ either no arm is pulled (i.e., an \emph{erasure} occurs), or a random arm is pulled (this is the \emph{trembling hand} model of \cite{karthik2021detecting}).

Finally, our model can also be interpreted as a \emph{privacy preservation} exercise on the part of learner. By performing its exploration via the \emph{non-adaptive} selection profiles of the boxes, the learner can obfuscate its own preferences from other observers. Naturally, this obfuscation comes at the expense of increased sampling complexity. Under this alternative interpretation, it may be reasonable to assume that the learner knows the arm selection probabilities of the boxes; our algorithms simplify naturally to this special case.


\subsection{Analytical Challenges}
Notice that in our problem setup, the learner only has {\em partial} control over the arms (via the boxes). This is unlike classical best arm identification problems \cite{kalyanakrishnan2012pac,kaufmann2016complexity,garivier2016optimal} where the learner has full control over the arm to pull at each time instant. For instance, in the Successive Elimination algorithm of \cite{even2006action} or the LUCB algorithm of \cite{jamieson2014best}, the learner pulls one or more arms at each time instant and either eliminates the sub-optimal arms or resolves between the best and second-best arms on-the-fly to eventually arrive at the best arm. In our setup, because a given arm may be accessible via multiple boxes, and the arm selection probabilities of the boxes are not known beforehand, it is not clear at the outset which box must be selected more frequently to maximise the chances of pulling an arm. In fact, if each arm belongs to every box and the arms selection probabilities of the boxes are all identical, then every randomised box selection rule yields the same expected stopping time. Thus, elimination or LUCB-type algorithms do not apply verbatim to our setting. 

We also note that the lower bounds appearing in \cite{garivier2016optimal,kaufmann2016complexity} admit a unique optimal solution (or {\em allocation}), and a key aspect of the best arm identification algorithms in these works is {\em tracking} or the convergence of the empirical arm selection frequencies to the optimal allocation. In contrast, we show that the optimal allocation in our setup is in general non-unique. In this case, the empirical frequencies may alternate between two or more optimal allocations and not converge to any of the optimal allocations in the long run. This underscores the need to improvise the existing tracking-based algorithms of~\cite{garivier2016optimal,kaufmann2016complexity} to handle non-unique optimal allocations.

\subsection{Contributions}
We derive an asymptotic lower bound on the growth rate of the expected stopping time, where the asymptotics is as the error probability vanishes. We show that this growth rate is captured by a sup-inf optimisation problem whose optimal (sup-attaining) solution (or {\em allocation}) is potentially non-unique. Inspired from the analytical techniques of Jedra et al. \cite{jedra2020optimal}, we propose a tracking-based algorithm that is improvised to handle non-unique allocations at every time step, and demonstrate that our algorithm achieves the lower bound asymptotically. In our achievability analysis, we track the long-term behaviour of the empirical average of all the past allocations, and show that the mean allocation eventually approaches the correct set of optimal allocations. Our achievability analysis can potentially be applied to more general problem settings where non-unique allocations arise naturally as in our work, or where proving the uniqueness of the optimal allocation is hard; see, for instance, the remarks in \cite{karthik2021detecting,chen2022federated}. 

Finally, in the special case when the arms are {\em partitioned} among the boxes, i.e., when set of arms accessible from one box does not overlap with that of the others, we present non-asymptotic guarantees for a variant of the successive elimination algorithm. We show that the expected stopping time of this algorithm satisfies an upper bound that is tight in the unknown instance parameters.



\subsection{Related Works}
Our setup closely resembles that in~\cite{scarlett2019overlapping}, where the arms are grouped into subsets as in our work, with potential overlap between the sets. However, the key difference is that in  \cite{scarlett2019overlapping}, the learner has full control over the arms (unlike partial control of arms in our work). Also, in \cite{scarlett2019overlapping}, the goal is to find the best arm within each subset, whereas in our work, the goal is to find the overall best arm. The paper~\cite{jain2021sequential} considers a similar setup as ours for a problem of community mode estimation, with the key difference that the analysis and results in \cite{jain2021sequential} are for the {\em fixed-budget} regime, whereas those of our work are for the {\em fixed-confidence} regime; see \cite{kaufmann2016complexity} for a comparison of these regimes. Our setup specialises to those in \cite{kaufmann2016complexity,even2006action,jamieson2014best,kalyanakrishnan2012pac} when the number of boxes equals the number of arms, and each box contains one arm. Our setup also specialises to the {\em trembling hand model} of \cite{karthik2021detecting}; in this model, when the learner attempts to pull arm~$k,$ it actually gets pulled with probability~$1-\eta,$ whereas a random arm, chosen uniformly, gets pulled with probability~$\eta.$
\section{Formulation and Preliminaries}
\label{sec:problem-setting}

We consider a $K$-armed bandit, where the arms are labelled $1, 2,
\ldots,K$. Arm~$k \in [K]$\footnote{Let $[n]\coloneqq \{1,
  \ldots, n\}$ for any integer $n \geq 1$.} is associated with a reward
distribution~$\nu_k \in \mathcal{G}$, where $\mathcal{G}$ is a known class of arm distributions. Let~$\mu_k$ denote the
mean reward of arm~$k.$ The goal of the learner is to identify, via
sequential sampling, the optimal arm, which is defined to be the arm
having the largest mean reward. However, unlike in the classical MAB
setting, the learner cannot sample (a.k.a., pull) individual arms
directly. Instead, at each epoch, the learner selects a \emph{box}
(from a finite collection of $M$ boxes), which results in an arm being
pulled randomly according to a box-dependent probability
distribution. Formally, selecting box~$m \in [M]$ results in arm~$k \in \mathcal{A}_m \subseteq [K]$ being
pulled with probability~$q_{m,k}.$ Here, $\mathcal{A}_m$ denotes the set of arms
that are `accessible' using box~$m$ (i.e., $q_{m,k} > 0$ for $k \in \mathcal{A}_m$
and $\sum_{k \in \mathcal{A}_m} q_{m,k} = 1$). Importantly, $\boldsymbol{q}
\coloneqq \{q_{m,k}: k \in \mathcal{A}_m, m \in [M]\}$ is apriori unknown to the
learner. Note that~$\boldsymbol{q}$ describes the imprecision in the learner's ability to pull specific arms. Indeed, capturing the impact of this sampling imprecision on the complexity of best arm identification is the main focus of this work.

The tuple $C=(\boldsymbol{q}, \boldsymbol{\nu})$ completely specifies
a {\em problem instance}, where $\boldsymbol{\nu} = (\nu_k: k \in
[K])$ is the vector of arm distributions. The optimal arm
corresponding to this problem instance is denoted by $a^\star(C) =
a^\star(\boldsymbol{\mu}) = \arg\max_{k \in [K]} \mu_k,$ where
$\boldsymbol{\mu} = (\mu_k : k \in [K]).$ The best arm is assumed to be uniquely defined for every problem instance. We write $\textsc{Alt}(\boldsymbol{\mu})$ to denote the set of instances {\em alternative} to $\boldsymbol{\mu}$, i.e., those instances whose best arm differs from $a^\star(\boldsymbol{\mu})$. When there is no ambiguity, we write $C=(\boldsymbol{q}, \boldsymbol{\mu})$ in place of $C=(\boldsymbol{q}, \boldsymbol{\nu})$. 

For $t \in \{1, 2, \ldots\}$, let $B_t$ denote the box selected by the
learner at time $t$. Upon selecting box $B_t=m$, arm $A_t=k$ is
pulled with probability $q_{m,k}$. The learner observes $A_t$ (i.e.,
it knows which arm was pulled) and the reward $X_t$ from arm
$A_t$. Let $(B_{1:t}, A_{1:t}, X_{1:t}) \coloneqq (B_1, A_1, X_1,
\ldots, B_{t}, A_{t}, X_{t})$ denote the history of box selections,
arm pulls, and observations seen up to time $t$. Given $\delta \in
(0,1)$, the goal of the learner is to find the best arm with the least
expected number of box selections (a.k.a. expected stopping time), while keeping the stoppage error
probability below $\delta$.

Let $\pi=\{\pi_t\}_{t=1}^{\infty}$ denote any generic best arm
identification {\em policy} (or algorithm), where for every $t \geq
1$, $\pi_t$ maps the history $(B_{1:t}, A_{1:t}, X_{1:t})$ to one of
the following actions:
\begin{itemize}
    \item Select box $B_{t+1}$ according to a deterministic or randomised rule.
    \item Stop and declare the estimated best arm.
\end{itemize}
Let $\tau_\pi$ denote the (random) stopping time under $\pi$, and let
$\hat{a}$ be the best arm estimate at stoppage.  For each $\delta\in
(0,1)$, our interest is in the class of {\em $\delta$-probably
  correct} ($\delta$-PC) algorithms defined by $\Pi(\delta) \coloneqq
\{\pi: P(\hat{a} \neq a^\star(C)) \leq \delta\quad \forall C\}.$ In
this paper, we design $\delta$-PC algorithms and
benchmark their stopping times against information theoretic
lower bounds.

In Section~\ref{sec:np}, we design and analyse a track-and-stop style
algorithm taking the class of arm distributions $\mathcal{G}$ to be
the family of Gaussian distributions with a known variance. This
algorithm is shown to be $\delta$-PC, and its expected stopping time is
shown to be asymptotically optimal as $\delta \downarrow 0.$ Next, in
Section~\ref{sec:partition-setting}, we consider the special case of our model where the
sets~$\{\mathcal{A}_m: m\in [M]\}$ are disjoint (i.e., the arms are {\em partitioned}
across boxes). For this case, taking $\mathcal{G}$ to be the family of
1-sub-Gaussian distributions, we design and analyse a successive-elimination style
algorithm which admits non-asymptotic (in $\delta$) stopping time
bounds.
 

\section{Track \& Stop Based Algorithm}
\label{sec:np}

We first study the general setting when each arm may be associated with multiple boxes. 
For simplicity in presentation, we assume that the observations from arm $k$ are Gaussian distributed with mean $\mu_k$ and variance $1$. 
Without loss of generality, we present our results for the extreme setting when each arm is associated with {\em every} box. 
Let the underlying instance be defined by $\boldsymbol{q}_0=\{q_{m,k}^{0}: m \in [M], k \in [K]\}$ and $\boldsymbol{\mu}_0=\{\mu_k^{0}: k \in [K]\}$. We first present an asymptotic (as $\delta \downarrow 0$) lower bound on the growth rate of the expected stopping time. 
Following this, we highlight the central challenge in the analysis of the non-partition setting: {\em non-uniqueness} of the optimal solution to the optimization problem that characterizes the lower bound. We then present a new track-and-stop based algorithm inspired from \cite{jedra2020optimal} and demonstrate its asymptotic optimality despite the above challenge. 



\subsection{Converse: Asymptotic Lower Bound}
The first main result of this section, a lower bound on the limiting growth rate of the expected stopping time for $\delta$-PC algorithms in the limit as $\delta \downarrow 0$, is presented below. 
\begin{theorem}
\label{prop:lower-bound}
Let $\boldsymbol{q}_0=\{q_{m,k}^{0}\}_{m,k}$, $\boldsymbol{\mu}_0=\{\mu_k^0\}_{k=1}^{K}$. Then,
\begin{equation}
    \liminf_{\delta \downarrow 0} \ \inf_{\pi \in \Pi(\delta)}\ \frac{\mathbb{E}[\tau_\pi]}{\log(1/\delta)} \geq \frac{1}{T^*(\boldsymbol{q}_0, \boldsymbol{\mu}_0)},
    \label{eq:lower-bound}
\end{equation}
where $T^*(\boldsymbol{q}_0, \boldsymbol{\mu}_0)$ in \eqref{eq:lower-bound} is given by
\begin{equation}
    T^*(\boldsymbol{q}_0, \boldsymbol{\mu}_0) \! = \!\sup_{w\in \Sigma_M} \inf_{\boldsymbol{\lambda} \in \textsc{Alt}(\boldsymbol{\mu}_0)} \sum_{m=1}^{M} \sum_{k=1}^{K} w_m \,  q_{m,k}^{0}\, \frac{(\mu_{k}^{0}- \lambda_{k})^2}{2},
    \label{eq:T-star-C}
\end{equation}
where $\Sigma_M$ is the simplex of all probability distributions (or {\em allocations}) $w=(w_1, \ldots, w_M)$ on the boxes.

\end{theorem}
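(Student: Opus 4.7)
The plan is to follow the standard change-of-measure lower bound argument of Garivier and Kaufmann \cite{garivier2016optimal}, adapted to the boxed sampling model where the per-step likelihood involves the mixture induced by the box-to-arm selection probabilities. The starting point is the transportation inequality of Kaufmann, Capp\'e and Garivier \cite{kaufmann2016complexity}: for any $\delta$-PC policy $\pi$, any instance $(\boldsymbol{q}_0,\boldsymbol{\mu}_0)$ and any alternative $\boldsymbol{\lambda} \in \textsc{Alt}(\boldsymbol{\mu}_0)$, the KL divergence between the laws of the history $(B_{1:\tau_\pi}, A_{1:\tau_\pi}, X_{1:\tau_\pi})$ under $(\boldsymbol{q}_0,\boldsymbol{\mu}_0)$ versus under $(\boldsymbol{q}_0,\boldsymbol{\lambda})$ is at least $\mathrm{kl}(\delta, 1-\delta)$, where $\mathrm{kl}(x,y)$ denotes the binary Kullback--Leibler divergence between Bernoulli distributions with parameters $x$ and $y$.

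My second step is to expand this KL divergence by chain-ruling it in time. Conditionally on the past history and on the chosen box $B_t = m$ (whose conditional law is identical under the two hypotheses, since it is determined by the policy and the past observations, and since the alternative is taken to share the same $\boldsymbol{q}_0$), the arm $A_t$ is drawn from $\{q_{m,k}^0\}_k$ in both hypotheses, and only the reward law differs: $X_t \mid A_t = k$ is $\mathcal{N}(\mu_k^0, 1)$ under $(\boldsymbol{q}_0, \boldsymbol{\mu}_0)$ and $\mathcal{N}(\lambda_k, 1)$ under $(\boldsymbol{q}_0, \boldsymbol{\lambda})$. The single-step KL contribution at time $t$ given $B_t = m$ therefore equals $\sum_k q_{m,k}^0 (\mu_k^0 - \lambda_k)^2/2$. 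Summing over $t = 1, \ldots, \tau_\pi$ and invoking Wald's identity (via the martingale structure underlying the transportation lemma) gives
\[
\sum_{m=1}^{M} \mathbb{E}[N_m(\tau_\pi)] \sum_{k=1}^{K} q_{m,k}^0 \, \frac{(\mu_k^0 - \lambda_k)^2}{2} \;\geq\; \mathrm{kl}(\delta, 1-\delta),
\]
where $N_m(t) \coloneqq \sum_{s=1}^{t} \mathbf{1}\{B_s = m\}$ counts the box-$m$ selections up to time $t$.

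For the final step, I divide both sides by $\mathbb{E}[\tau_\pi]$ and set $w_m^{(\pi,\delta)} \coloneqq \mathbb{E}[N_m(\tau_\pi)]/\mathbb{E}[\tau_\pi]$, which lies in $\Sigma_M$. The bound becomes
\[
\mathbb{E}[\tau_\pi] \cdot \sum_{m,k} w_m^{(\pi,\delta)} \, q_{m,k}^0 \, \frac{(\mu_k^0 - \lambda_k)^2}{2} \;\geq\; \mathrm{kl}(\delta, 1-\delta).
\]
Since this inequality holds simultaneously for every $\boldsymbol{\lambda} \in \textsc{Alt}(\boldsymbol{\mu}_0)$, I take the infimum over $\boldsymbol{\lambda}$ on the left-hand side; then bounding this infimum (evaluated at the specific $w^{(\pi,\delta)}$) above by the sup-inf over $(w,\boldsymbol{\lambda})$ yields $\mathbb{E}[\tau_\pi] \cdot T^*(\boldsymbol{q}_0, \boldsymbol{\mu}_0) \geq \mathrm{kl}(\delta, 1-\delta)$. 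Dividing by $\log(1/\delta)$ and invoking $\mathrm{kl}(\delta, 1-\delta)/\log(1/\delta) \to 1$ as $\delta \downarrow 0$ yields \eqref{eq:lower-bound}.

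The argument is essentially templated from the classical fixed-confidence literature, so I do not anticipate a serious obstacle on the converse side. The main care points are (i) correctly identifying the per-step observation model as the triple $(B_t, A_t, X_t)$ with only the reward measure depending on $\boldsymbol{\mu}$, and (ii) noting that it suffices to restrict the alternatives to those sharing $\boldsymbol{q}_0$, since $\textsc{Alt}(\boldsymbol{\mu}_0)$ constrains only the best arm of the mean vector. The genuine difficulty highlighted in the paper, namely the potential \emph{non-uniqueness} of the $w$ attaining the supremum in \eqref{eq:T-star-C}, is an obstacle only for the design of a matching algorithm, not for proving the converse.
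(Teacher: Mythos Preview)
Your proposal is correct and follows essentially the same approach as the paper, which explicitly notes that the proof is ``quite standard'' and ``employs a change-of-measure argument for bandits \cite{lairobbins1985}, the transportation lemma of \cite{kaufmann2016complexity}, and Wald's identity.'' Your observation that one may restrict to alternatives sharing $\boldsymbol{q}_0$ is precisely the simplification the paper's (suppressed) detailed proof makes after noting that the infimum over $\boldsymbol{q}$ of the KL term $\sum_{m,k} \mathbb{E}[N(\tau_\pi,m)]\, q_{m,k}^0 \log(q_{m,k}^0/q_{m,k})$ is zero, attained at $\boldsymbol{q}=\boldsymbol{q}_0$.
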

The proof of Theorem~\ref{prop:lower-bound} is quite standard and omitted for brevity. It employs a change-of-measure argument for bandits~\cite{lairobbins1985}, the transportation lemma of \cite{kaufmann2016complexity}, and Wald's identity. 

\subsection{Non-Uniqueness of the Optimal Allocation}
Consider the following simple example with $M=2$ boxes and $K=4$ arms. Suppose that $\boldsymbol{\mu}_0=\{0.5, 0.4, 0.3, 0.3\}$. Notice that arm $1$ is the best arm. Let $\boldsymbol{q}_0$ be specified by the following matrix:
$$
\boldsymbol{q}_0 =
\begin{pmatrix}
$0.3$ & $0.3$ & $0.3$ & $0.1$\\
$0.3$ & $0.3$ & $0.1$ & $0.3$
\end{pmatrix}.
$$
The first row of the above matrix represents the arm selection probabilities of box $1$, and the second row that of box $2$. For this example, it is easy to show that every $w=(w_1, w_2) \in \Sigma_2$ attains the supremum in \eqref{eq:T-star-C} and hence is an optimal allocation. 

The above example shows that the optimal allocation can potentially be non-unique. This is in contrast to the prior works \cite{garivier2016optimal,kaufmann2016complexity} where the optimal allocation is unique. Let
$\mathcal{W}^\star(\boldsymbol{q}_0, \boldsymbol{\mu}_0)$ denote the set of all allocations that attain the supremum in \eqref{eq:T-star-C}. More generally, let $\mathcal{W}^\star(\boldsymbol{q}, \boldsymbol{\mu})$ denote the set of optimal allocations corresponding to the instance $(\boldsymbol{q}, \boldsymbol{\mu})$.
\begin{lemma}
\label{lemma:berges-plus-convexity}
    The mapping $(\boldsymbol{q}, \boldsymbol{\mu}) \mapsto \mathcal{W}^\star(\boldsymbol{q}, \boldsymbol{\mu})$ is upper-hemicontinuous and compact-valued. Furthermore, $\mathcal{W}^\star(\boldsymbol{q}, \boldsymbol{\mu})$ is convex for all $(\boldsymbol{q}, \boldsymbol{\mu})$.
\end{lemma}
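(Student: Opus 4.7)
The plan is to treat $\mathcal{W}^\star(\boldsymbol{q},\boldsymbol{\mu})$ as the $\arg\max$ over $\Sigma_M$ of the value function
$$g(w;\boldsymbol{q},\boldsymbol{\mu}) \coloneqq \inf_{\boldsymbol{\lambda}\in\textsc{Alt}(\boldsymbol{\mu})}\ \sum_{m,k} w_m\, q_{m,k}\, \frac{(\mu_k-\lambda_k)^2}{2},$$
and to invoke two standard facts: that the infimum of affine functions is concave, and Berge's maximum theorem.

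For the convexity and compactness claims, observe that for each fixed $\boldsymbol{\lambda}$ the summand is affine in $w$. Hence $g(\cdot;\boldsymbol{q},\boldsymbol{\mu})$ is concave on $\Sigma_M$ as a pointwise infimum of affine functions, and in particular upper semi-continuous. Because $\Sigma_M$ is convex and compact, the supremum is attained, and $\mathcal{W}^\star(\boldsymbol{q},\boldsymbol{\mu})$ is simultaneously a closed subset of the compact simplex (hence compact) and a super-level set of a concave function restricted to a convex set (hence convex).

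For upper-hemicontinuity, I would apply Berge's maximum theorem with the constant feasible correspondence $(\boldsymbol{q},\boldsymbol{\mu})\mapsto\Sigma_M$; this reduces the claim to joint continuity of $g$ in $(w,\boldsymbol{q},\boldsymbol{\mu})$. The natural route is to derive a closed form for the inner infimum. Writing $k^{\star}\coloneqq a^\star(\boldsymbol{\mu})$, which is unique by the paper's standing assumption, one has (up to closure, which does not alter the infimum of a continuous function)
$$\textsc{Alt}(\boldsymbol{\mu}) \;=\; \bigcup_{k\neq k^{\star}}\ \{\boldsymbol{\lambda}:\lambda_k \geq \lambda_{k^{\star}}\}.$$
For each alternative arm $k$, the subproblem is a two-variable quadratic program: set $\lambda_j=\mu_j$ for $j\notin\{k,k^{\star}\}$ (those coordinates are unconstrained), and solve explicitly to obtain
$$h_k(w;\boldsymbol{q},\boldsymbol{\mu}) \;=\; \frac{c_k\,c_{k^{\star}}}{c_k+c_{k^{\star}}}\cdot\frac{(\mu_{k^{\star}}-\mu_k)^2}{2},\qquad c_j\coloneqq\sum_{m} w_m\, q_{m,j},$$
interpreted as zero when $c_k+c_{k^{\star}}=0$. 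Each $h_k$ extends continuously to the boundary $c_k+c_{k^{\star}}=0$ (since $h_k\leq \min(c_k,c_{k^{\star}})\cdot(\mu_{k^{\star}}-\mu_k)^2/2$), so $g=\min_{k\neq k^{\star}} h_k$ is jointly continuous. Berge then delivers both continuity of the value $T^\star$ and upper-hemicontinuity of $\mathcal{W}^\star$.

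The only delicate point, which I anticipate as the principal obstacle, is that $\textsc{Alt}(\boldsymbol{\mu})$ depends combinatorially on $\boldsymbol{\mu}$ through the identity of $k^{\star}$, so the above closed form is valid only where $a^\star(\cdot)$ is locally constant. This is handled by the uniqueness assumption on the best arm: at every instance under consideration the strict gap $\mu_{k^\star}-\max_{k\neq k^\star}\mu_k$ is positive, so $a^\star$ is locally constant and the continuity argument for $g$ applies uniformly on a neighborhood of every such instance, yielding upper-hemicontinuity of $\mathcal{W}^\star$ throughout the domain.
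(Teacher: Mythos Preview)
Your proposal is correct and follows essentially the same approach as the paper: both derive the closed form $g=\min_{k\neq k^\star}\frac{c_k c_{k^\star}}{c_k+c_{k^\star}}\cdot\frac{(\mu_{k^\star}-\mu_k)^2}{2}$ for the inner infimum, deduce joint continuity, and invoke Berge's maximum theorem for upper-hemicontinuity; for convexity, both rely on the concavity of $w\mapsto g(w;\boldsymbol{q},\boldsymbol{\mu})$ as an infimum of affine functions, with the paper spelling out the super-additivity inequality explicitly and you stating the super-level-set characterisation more abstractly. If anything, your treatment is a bit more careful than the paper's on two points the paper leaves implicit: the continuous extension of each $h_k$ to the boundary $c_k+c_{k^\star}=0$, and the local constancy of $a^\star(\boldsymbol{\mu})$ needed to make the closed form valid in a neighbourhood of each instance.
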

In particular, Lemma~\ref{lemma:berges-plus-convexity} implies that $\mathcal{W}^\star(\boldsymbol{q}_0, \boldsymbol{\mu}_0)$ is convex; this, we shall see, will play an important role in the design of an asymptotically optimal algorithm, which forms the content of the next section.

\subsection{Achievability: Handling Non-Unique Allocations}
A key feature of the best arm identification algorithms and the contingent achievability analyses in the prior works \cite{kaufmann2016complexity,garivier2016optimal} is {\em tracking}, or the almost sure convergence of the empirical frequencies of arm pulls to the optimal allocation. When the optimal allocation is non-unique as in our work, the empirical frequencies may alternate among two or more optimal allocations in the long-run and not converge to any one of the optimal allocations, in which case it is difficult to establish tracking. This emphasises the need to improvise the existing tracking-based algorithms to handle potentially non-unique optimal allocations and develop a framework to demonstrate tracking-like behaviour. In this section, we present a technique 
that achieves this.

Let $N(t,m,k)$ be the total number of times box $m$ is selected and arm $k$ is pulled up to time $t$. 
Define $N(t,m)=\sum_k N(t,m,k)$ be the number of times box $m$ is selected up to time $t$, and $N_k(t)=\sum_m N(t,m,k)$ be the number of times arm $k$ is pulled up to time $t$. For all $m,k$, let 
\begin{equation}
    \hat{q}_{m,k}(t)=\frac{N(t,m,k)}{N(t,m)}, \quad \hat{\mu}_{k}(t)=\frac{1}{N_k(t)} \sum_{s=1}^{t} \mathbf{1}_{\{A_s=k\}}\, X_s,
    \label{eq:parameter-estimates}
\end{equation}
be the empirical estimates of the unknown parameters at time~$t$.
The key result (inspired from \cite{jedra2020optimal}) that enables us to prove achievability while working with a set of optimal allocations, is stated below.
\begin{lemma}
\label{lemma:modified-tracking-with-non-unique-optimal-solutions}
Let $f(t)=\frac{\sqrt{t}}{\sqrt{M}}$. Let $\{w(t)\}_{t=1}^{\infty} \subset \Sigma_M$ be any sequence such that $w(t+1) \in \mathcal{W}^\star(\hat{\boldsymbol{q}}(t), \hat{\boldsymbol{\mu}}(t))$ for all $t$. Let $i_0=0$ and
$$
i_{t+1}=(i_{t}\ {\rm mod}\ M)+\mathbf{1}_{\left\lbrace\min_{m\in [M]} N(t, m) < f(t)\right\rbrace}, \quad  t \geq 0.
$$ 
Then, under the {\em modified D-tracking rule} given by
\begin{equation}
    B_{t+1} = 
    \begin{cases}
        i_{t}, & \min_{m\in [M]} N(t, m) < f(t),\\
        b_t, & \text{otherwise},
    \end{cases}
    \label{eq:sampling-rule}
\end{equation}
where $b_t=\arg\min_{m \in \text{supp}(\sum_{s=1}^{t} w(s))} N(t, m) - \sum_{s=1}^{t} w_m(s)$, we have \begin{equation}
    \lim_{t \to\infty} d_{\infty}((N(t, m)/t)_{m \in [M]},\ \mathcal{W}^\star(\boldsymbol{q}_0, \boldsymbol{\mu}_0))=0 \quad \text{a.s.}.
    \label{eq:arm-frequencies-approach-the-correct-set}
\end{equation}
\end{lemma}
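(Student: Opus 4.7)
The plan is to combine four ingredients in sequence: forced exploration for consistency, upper hemicontinuity of $\mathcal{W}^\star$ to transfer this consistency to the allocations $w(t)$, convexity of $\mathcal{W}^\star(\boldsymbol{q}_0,\boldsymbol{\mu}_0)$ to pass from pointwise proximity of $w(t)$ to proximity of their Cesàro average, and finally a standard D-tracking bound to equate $N(t,m)/t$ with this average.

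First, I would establish consistency of the empirical estimates. The forced-exploration clause in \eqref{eq:sampling-rule}, together with the fact that the index $i_t$ cycles through $[M]$, guarantees the usual bound $N(t,m) \geq f(t)-1$ for all $m$ and all sufficiently large $t$. Since $f(t) = \sqrt{t/M}\to\infty$, every box is sampled infinitely often, and so every arm that lies in any $\mathcal{A}_m$ is pulled infinitely often. Applying the strong law of large numbers to the appropriate Bernoulli and Gaussian observation sequences then gives $\hat q_{m,k}(t)\to q_{m,k}^0$ and $\hat\mu_k(t)\to\mu_k^0$ almost surely. Fix an $\omega$ in the full-measure set on which this convergence holds; all subsequent arguments are deterministic given $\omega$.

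Next I would exploit Lemma~\ref{lemma:berges-plus-convexity}. Upper hemicontinuity of $\mathcal{W}^\star$ at $(\boldsymbol{q}_0,\boldsymbol{\mu}_0)$ together with compact-valuedness says that for every $\varepsilon>0$ there is $T_\varepsilon$ such that for all $t\geq T_\varepsilon$, the set $\mathcal{W}^\star(\hat{\boldsymbol{q}}(t),\hat{\boldsymbol{\mu}}(t))$ is contained in the open $\varepsilon$-neighbourhood $U_\varepsilon$ of $\mathcal{W}^\star(\boldsymbol{q}_0,\boldsymbol{\mu}_0)$. By hypothesis $w(t+1)$ lies in this set, so $w(s)\in U_\varepsilon$ for every $s>T_\varepsilon$. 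Now I invoke convexity of $\mathcal{W}^\star(\boldsymbol{q}_0,\boldsymbol{\mu}_0)$: since $U_\varepsilon$ is the neighbourhood of a convex set (which is itself contained in the convex set $U_{2\varepsilon}$ after a short argument bounding the projection), the Cesàro average $\bar w(t) := \tfrac{1}{t}\sum_{s=1}^t w(s)$ satisfies $d_\infty(\bar w(t),\mathcal{W}^\star(\boldsymbol{q}_0,\boldsymbol{\mu}_0))\leq \varepsilon + O(T_\varepsilon/t)$, which is below $2\varepsilon$ for $t$ large. Hence $d_\infty(\bar w(t),\mathcal{W}^\star(\boldsymbol{q}_0,\boldsymbol{\mu}_0))\to 0$. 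This is precisely where non-uniqueness is handled: individual $w(t)$ need not converge, but their average is dragged into the convex limit set.

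Finally, I would deduce that the empirical box frequencies stay close to $\bar w(t)$. The modified D-tracking rule \eqref{eq:sampling-rule} is exactly Garivier--Kaufmann-style tracking of the targets $\sum_{s\leq t} w_m(s) = t\,\bar w_m(t)$, augmented by forced exploration that fires at most $O(\sqrt{t/M})$ times. The standard argmin-deficit argument (control by induction of $\max_m N(t,m)-\sum_{s=1}^t w_m(s)$ and the complementary minimum) yields $\max_m |N(t,m) - t\,\bar w_m(t)| \leq M + \sqrt{tM}$, so $N(t,m)/t - \bar w_m(t)\to 0$. Combining with the previous step gives \eqref{eq:arm-frequencies-approach-the-correct-set}.

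The main obstacle is the middle step: ensuring that the support of $\sum_{s=1}^t w(s)$, on which $b_t$ is defined, is stable enough that the D-tracking bound applies uniformly, and simultaneously that the deterministic oscillation of $w(t)$ across distinct optima in $\mathcal{W}^\star(\boldsymbol{q}_0,\boldsymbol{\mu}_0)$ does not prevent the average from converging. Both issues are resolved by using convexity of $\mathcal{W}^\star(\boldsymbol{q}_0,\boldsymbol{\mu}_0)$ together with the fact that any support entering $\sum_{s} w(s)$ persists thereafter, so the effective target is eventually a fixed subset of $[M]$ that contains $\bigcup_{w\in\mathcal{W}^\star(\boldsymbol{q}_0,\boldsymbol{\mu}_0)}\mathrm{supp}(w)$. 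This is where the technical borrowing from \cite{jedra2020optimal} is used.
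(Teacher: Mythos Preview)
Your proposal is correct and takes essentially the same approach as the paper (both borrowed from \cite{jedra2020optimal}): forced exploration for consistency of $(\hat{\boldsymbol q}(t),\hat{\boldsymbol\mu}(t))$, upper-hemicontinuity plus convexity of $\mathcal{W}^\star(\boldsymbol{q}_0,\boldsymbol{\mu}_0)$ to show the Ces\`aro average $\bar w(t)$ approaches $\mathcal{W}^\star(\boldsymbol{q}_0,\boldsymbol{\mu}_0)$, and a tracking argument to close the gap between $N(t,\cdot)/t$ and $\bar w(t)$. The only cosmetic difference is that the paper carries out the last step by inducting on $\varepsilon_m(t)=N(t,m)-t\,\hat w_m(t)$ with $\hat w(t)$ the projection of $\bar w(t)$ onto $\mathcal{W}^\star(\boldsymbol{q}_0,\boldsymbol{\mu}_0)$ and obtains an $O(t\varepsilon)$ bound via the inclusion $\{B_{t+1}=m\}\subset\{\varepsilon_m(t)\le 9t\varepsilon\}$, whereas you decouple the two limits and quote an $O(\sqrt{t})$ C-tracking bound on $N(t,m)-t\,\bar w_m(t)$ directly.
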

Notice that the sampling rule in \eqref{eq:sampling-rule} selects each of the boxes once at the beginning. At time $t$, if any one of the boxes is under-sampled, i.e., $\min_m N(t,m)<f(t)$, the boxes are sampled forcefully in a round-robin fashion until $N(t,m)=\Omega(\sqrt{t})$ for all $m$. 
Else, a box is sampled based the allocations $\{w(s): 1 \leq s \leq t\}.$ Here, $w(s)$ is an \emph{arbitrary} element of $\mathcal{W}^\star(\hat{\boldsymbol{q}}(s-1), \hat{\boldsymbol{\mu}}(s-1)),$ the set of optimal allocations corresponding to the \emph{estimated} instance parameters at time~$s-1.$
This is the principle of {\em certainty equivalence.} In the proof, we show 
that the empirical average of all the allocations up to time $t$, $\bar{w}(t)=1/t \sum_{s=1}^{t} w(s)$, approaches $\mathcal{W}^\star(\boldsymbol{q}_0, \boldsymbol{\mu}_0)$ as $t\to\infty$. Thanks to the forced exploration of the boxes, $(\hat{\boldsymbol{q}}(t), \hat{\boldsymbol{\mu}}(t))$ approaches $(\boldsymbol{q}_0, \boldsymbol{\mu}_0)$ a.s. as $t \to \infty$. This, together with the upper-hemicontinuity property of $\mathcal{W}^\star$ from Lemma~\ref{lemma:berges-plus-convexity}, implies that $\mathcal{W}^\star({\boldsymbol{q}}(t), {\boldsymbol{\mu}}(t))$ is ``close'' to $\mathcal{W}^\star(\boldsymbol{q}_0, \boldsymbol{\mu}_0)$ for large~$t$, a.s., from which we arrive at \eqref{eq:arm-frequencies-approach-the-correct-set}.

Let $Z_{a,b}(t)$ denote the {\em generalised likelihood ratio test statistic} between arms $a,b \in [K]$ up to time $t$, defined as
\begin{equation}
    Z_{a, b}(t) \coloneqq \log \frac{\sup_{(\boldsymbol{q}, \boldsymbol{\mu}): \mu_{a} \geq \mu_b} P(B_{1:t}, A_{1:t}, X_{1:t})}{\sup_{(\boldsymbol{q}, \boldsymbol{\mu}): \mu_{a} \leq \mu_b} P(B_{1:t}, A_{1:t}, X_{1:t})}.
    \label{eq:GLRT-statistic}
\end{equation}
The next result provides an explicit expression for $Z_{a, b}(t)$.
\begin{lemma}
\label{lemma:explicit-expression-for-Z-a-b}
Fix $a, b \in [K]$ and a policy $\pi$. Fix $t$ such that $\min_{k \in [K]} N_k(t)>0$ a.s.. If $\hat{\mu}_a(t) \geq \hat{\mu}_b(t)$, then
\begin{equation}
    Z_{a, b}^\pi(t) 
    \!=\! N_a(t) \frac{\left( \hat{\mu}_a(t) - \hat{\mu}_{a, b}(t) \right)^2}{2} + N_b(t) \frac{\left( \hat{\mu}_b(t) - \hat{\mu}_{a, b}(t) \right)^2}{2},
    \label{eq:test-statistic-2}
\end{equation}
where $\hat{\mu}_{a, b}(t)$ is defined as
\begin{equation}
    \hat{\mu}_{a, b}(t) \coloneqq \frac{N_a(t)}{N_a(t) + N_b(t)} \hat{\mu}_a(t) + \frac{N_b(t)}{N_a(t) + N_b(t)} \hat{\mu}_b(t).
    \label{eq:mu-hat-a-b-of-n}
\end{equation}
If $\hat{\mu}_a(t) \leq \hat{\mu}_b(t)$, then $Z_{a,b}^\pi(t) = -Z_{b,a}^\pi(t)$.
Thus, $Z_{a, b}^\pi(t) \geq 0$ if and only if $\hat{\mu}_a(t) \geq \hat{\mu}_b(t)$.
\end{lemma}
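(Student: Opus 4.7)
The plan is to write out the joint likelihood explicitly, identify which factors actually depend on $(\boldsymbol{q}, \boldsymbol{\mu})$, and then reduce everything to a two-parameter constrained versus unconstrained Gaussian maximum that can be handled in closed form.

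First, I would factor $P(B_{1:t}, A_{1:t}, X_{1:t})$ as a product over $s = 1, \ldots, t$ of (i) the policy-induced probability of selecting $B_s$ given the history, (ii) the conditional arm-pull probability $q_{B_s, A_s}$, and (iii) the Gaussian density $\phi(X_s; \mu_{A_s}, 1)$. Factor (i) is determined entirely by the policy rule and does not depend on $(\boldsymbol{q}, \boldsymbol{\mu})$, so it appears identically in both suprema in \eqref{eq:GLRT-statistic} and cancels in the log-ratio. Up to an additive constant, the remaining log-likelihood is $\sum_{m,k} N(t,m,k) \log q_{m,k} - \sum_{k} \tfrac{1}{2} \sum_{s: A_s = k} (X_s - \mu_k)^2$.

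Next, I would observe that the constraint $\mu_a \geq \mu_b$ (or its reverse) places no restriction on $\boldsymbol{q}$, so the unconstrained MLE $\hat q_{m,k}(t)$ attains the sup over $\boldsymbol{q}$ in both the numerator and denominator, and this contribution cancels. Similarly, for $k \notin \{a, b\}$ the parameter $\mu_k$ is unconstrained in both suprema, so its MLE $\hat\mu_k(t)$ appears in both and cancels as well. Thus $Z_{a,b}^{\pi}(t)$ reduces to the difference between two maximizations of a negative-quadratic form in $(\mu_a, \mu_b)$ alone, over opposite half-planes.

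Now assume $\hat\mu_a(t) \geq \hat\mu_b(t)$. The unconstrained maximizer $(\hat\mu_a(t), \hat\mu_b(t))$ is feasible for the numerator, so the numerator sup equals $-\tfrac{1}{2}\sum_{s:A_s = a}(X_s - \hat\mu_a(t))^2 - \tfrac{1}{2}\sum_{s:A_s = b}(X_s - \hat\mu_b(t))^2$. For the denominator, the unconstrained optimizer is infeasible, and since the objective is strictly concave (by $\min_k N_k(t) > 0$) and $\{\mu_a \leq \mu_b\}$ is convex, the maximum lies on the boundary $\mu_a = \mu_b =: \mu$. Minimizing $\sum_{s:A_s=a}(X_s - \mu)^2 + \sum_{s:A_s=b}(X_s - \mu)^2$ in $\mu$ yields exactly the pooled mean $\hat\mu_{a,b}(t)$ of \eqref{eq:mu-hat-a-b-of-n}. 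Subtracting and applying the standard identity $\sum_{s:A_s = k}(X_s - c)^2 = \sum_{s:A_s = k}(X_s - \hat\mu_k(t))^2 + N_k(t)(\hat\mu_k(t) - c)^2$ for $k \in \{a, b\}$ with $c = \hat\mu_{a,b}(t)$, the residual sum-of-squares terms cancel and precisely \eqref{eq:test-statistic-2} remains.

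The skew-symmetry $Z_{a,b}^{\pi}(t) = -Z_{b,a}^{\pi}(t)$ in the case $\hat\mu_a(t) \leq \hat\mu_b(t)$ is then immediate by swapping $a$ and $b$ in the preceding argument, using that $\hat\mu_{a,b}(t) = \hat\mu_{b,a}(t)$ by the symmetry in \eqref{eq:mu-hat-a-b-of-n}. The sign characterization follows because in the regime $\hat\mu_a(t) \geq \hat\mu_b(t)$ the right-hand side of \eqref{eq:test-statistic-2} is a sum of non-negative terms. There is no real obstacle here; the only point warranting care is the activation of the equality constraint on the wrong side, which I would justify via strict concavity (guaranteed by $\min_k N_k(t) > 0$) together with convexity of the feasible set.
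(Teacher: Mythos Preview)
Your proposal is correct and follows essentially the same route as the paper's proof: factor the likelihood, cancel the policy-dependent and $\boldsymbol{q}$-dependent pieces (the latter maximized at $\hat q_{m,k}(t)$ in both numerator and denominator), reduce to a constrained Gaussian MLE in $(\mu_a,\mu_b)$, and identify the boundary solution $\mu_a=\mu_b=\hat\mu_{a,b}(t)$ on the infeasible side. The paper presents the final subtraction as a direct algebraic expansion rather than invoking the sum-of-squares decomposition, but the content is identical.
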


Let $Z(t)=\max_a \min_{b \neq a} Z_{a,b}(t)$. For fixed $\delta \in (0,1)$ and $\rho>0$, let 
$
\zeta(t, \delta, \rho)\coloneqq \log (C \, t^{1+\rho}/\delta)
$, where $C$ is a constant that satisfies $\sum_{t=1}^{\infty} \frac{e^{K+1}}{K^K} \frac{(\log^2(C\, t^{1+\rho}) \, \log t)^K}{t^{1+\rho}} \leq C$.

\begin{algorithm}[!t]
 \caption{Boxed-Bandit Modified Track-and-Stop}
 \begin{algorithmic}[1]
 \renewcommand{\algorithmicrequire}{\textbf{Input:}}
 \renewcommand{\algorithmicensure}{\textbf{Output:}}
 \REQUIRE $\delta \in (0, 1)$, $\rho>0$, $K \in \mathbb{N}$, and $M \in \mathbb{N}$
 \ENSURE $\hat{a} \in [K]$ -- best arm\\
 \textbf{Initialisation: } $t=0$, $\hat{\mu}_a(t)=0\, \forall a \in [K]$, $Z(0)=0$.
 \STATE Compute $\hat{a}(t)=\arg\max_{a} \hat{\mu}_a(t)$.
 \IF{$Z(t) \geq \zeta(t, \delta, \rho)$ and $\min_k N_k(t) > 0$}
    \STATE Stop box selections.
    \RETURN $\hat{a} = \arg\max_{a \in [K]} \hat{\mu}_a(t)$
 \ELSE
    \STATE Select box $B_{t+1}$ as per {\em modified D-tracking rule} \eqref{eq:sampling-rule}.
    \STATE Update $\hat{\boldsymbol{q}}(t)$, $\hat{\boldsymbol{\mu}}(t)$, and $Z(t)$. Go to step 1.
 \ENDIF
 \end{algorithmic}
 \label{algo:algorithm-BBMTS}
\end{algorithm}

\vspace{0.5em}
\noindent {\bf Algorithm for best arm identification:}  We propose a variant of the track-and-stop algorithm of Garivier et al. \cite{garivier2016optimal}, called {\em Boxed-Bandit Modified Track-and-Stop} or BBMTS, that is improvised to work with a set of allocations at each time step. Our algorithm takes as input two parameters: $\delta \in (0,1)$ and $\rho>0$. At each time instant $t$, the algorithm maintains an estimate of the best arm $\hat{a}(t) \in \arg\max_{a} \hat{\mu}_a(t)$, with ties resolved uniformly randomly. Lemma~\ref{lemma:explicit-expression-for-Z-a-b} shows that $\hat{a}(t) \in \arg\max_a \min_{b \neq a} Z_{a,b}(t)$. Then, the algorithm checks if $\min_{b \neq \hat{a}(t)} Z_{\hat{a}(t), b}(t) \geq \zeta(t, \delta, \rho)$. If this holds, the algorithm is sufficiently confident that $\hat{a}(t)$ is the best arm; it stops and outputs $\hat{a}(t)$ as the best arm. Else, the algorithm samples box $B_{t+1}$ according to the modified D-tracking rule \eqref{eq:sampling-rule} as explained earlier. See Algorithm~\ref{algo:algorithm-BBMTS} for a pseudo-code.

\vspace{0.5em}

\noindent {\bf Performance analysis of BBMTS:} Let $\pi_{\text{BBMTS}}(\delta, \rho)$ symbolically represent the BBMTS algorithm with parameters $\delta, \rho$. The below result characterises its performance.
\begin{theorem}
\label{thm:performance-of-BBMTS}
The BBMTS algorithm meets the following performance criteria.
\begin{enumerate}
    \item $\pi_{\text{BBMTS}}(\delta, \rho) \in \Pi(\delta)$ for each $\delta \in (0,1)$ and $\rho > 0$.
    
    \item For each $\rho>0$, the stopping time of $\pi_{\text{BBMTS}}(\delta, \rho)$ satisfies 
    \begin{equation}
        \limsup_{\delta \downarrow 0} \frac{\tau_{\pi_{\text{BBMTS}}(\delta, \rho)}}{\log(1/\delta)} \leq \frac{1+\rho}{T^*(\boldsymbol{q}_0, \boldsymbol{\mu}_0)} \quad \text{a.s.}.
        \label{eq:almost-sure-upper-bound-BBMTS}
    \end{equation}
    
    \item For each $\rho>0$, the quantity $\mathbb{E}[\tau_{\pi_{\text{BBMTS}}(\delta, \rho)}]$ satisfies
    \begin{equation}
        \limsup_{\delta \downarrow 0} \frac{\mathbb{E}[\tau_{\pi_{\text{BBMTS}}(\delta, \rho)}]}{\log(1/\delta)} \leq \frac{1+\rho}{T^*(\boldsymbol{q}_0, \boldsymbol{\mu}_0)}.
        \label{eq:expected-upper-bound-BBMTS}
    \end{equation}
\end{enumerate}
\end{theorem}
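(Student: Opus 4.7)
The plan is to establish the three claims in sequence, combining Lemmas~\ref{lemma:berges-plus-convexity}--\ref{lemma:explicit-expression-for-Z-a-b} with the tracking guarantee of Lemma~\ref{lemma:modified-tracking-with-non-unique-optimal-solutions}. For $\delta$-correctness, first observe from Lemma~\ref{lemma:explicit-expression-for-Z-a-b} that whenever $Z(t)\geq 0$ the estimator $\hat{a}(t)=\arg\max_a\min_{b\neq a} Z_{a,b}(t)$ coincides with the empirical best arm; hence an erroneous stoppage at some time~$t$ forces $Z_{a,a^\star}(t)\geq \zeta(t,\delta,\rho)$ together with $\hat\mu_a(t)>\hat\mu_{a^\star}(t)$ for some $a\neq a^\star$. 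Plugging the closed form \eqref{eq:test-statistic-2} into this event and applying a Chernoff change-of-measure to the two-arm alternative in which the means of $a$ and $a^\star$ are collapsed to the common empirical midpoint $\hat\mu_{a,a^\star}(t)$, followed by a peeling argument over the admissible values of $(N_a(t),N_{a^\star}(t))$ and $(\hat q_{m,k}(t))$, yields a per-$(t,a)$ bound proportional to $(\log^2(Ct^{1+\rho})\log t)^K/t^{1+\rho}$. The summability condition defining $C$ is precisely what is needed to make the union bound over $a\neq a^\star$ and $t\geq 1$ collapse to $\delta$.

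For the almost-sure stopping bound, the forced-exploration clause of \eqref{eq:sampling-rule} drives $N(t,m)\geq f(t)-1$ for every $m$, so $\hat{\boldsymbol{q}}(t)\to\boldsymbol{q}_0$ and $\hat{\boldsymbol{\mu}}(t)\to\boldsymbol{\mu}_0$ a.s.\ by the strong law. Combined with Lemma~\ref{lemma:modified-tracking-with-non-unique-optimal-solutions}, which gives $d_\infty((N(t,m)/t)_m,\mathcal{W}^\star(\boldsymbol{q}_0,\boldsymbol{\mu}_0))\to 0$ a.s., and continuity of the inner infimum in \eqref{eq:T-star-C}, substituting into the closed form of $Z(t)$ from \eqref{eq:test-statistic-2} yields $\liminf_{t\to\infty} Z(t)/t \geq T^*(\boldsymbol{q}_0,\boldsymbol{\mu}_0)$ a.s. For any $\epsilon>0$, this combined with the stopping condition $Z(\tau)\geq \log C+(1+\rho)\log\tau+\log(1/\delta)$ gives, on an almost-sure event, $\tau(T^*-\epsilon)\leq \log C + (1+\rho)\log\tau + \log(1/\delta)$; solving for $\tau$, dividing by $\log(1/\delta)$, and letting $\delta\downarrow 0$ and then $\epsilon\downarrow 0$ produces \eqref{eq:almost-sure-upper-bound-BBMTS}. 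The main subtlety here is that the empirical allocation $(N(t,m)/t)_m$ may indefinitely oscillate within $\mathcal{W}^\star(\boldsymbol{q}_0,\boldsymbol{\mu}_0)$ rather than converge to any single point, so one must exploit upper-hemicontinuity and convexity from Lemma~\ref{lemma:berges-plus-convexity} to ensure every subsequential limit still realises the optimal value in \eqref{eq:T-star-C}.

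For the expectation bound, I would upgrade the a.s.~statement above to uniform integrability of $\tau_{\pi_{\text{BBMTS}}(\delta,\rho)}/\log(1/\delta)$. Concretely, Hoeffding-type concentration on $(\hat{\boldsymbol{q}}(t),\hat{\boldsymbol{\mu}}(t))$ and on the tracking error, applied on the deviation event $\{Z(t)/t<T^*-\epsilon\}$, should yield $P(\tau>t)\leq \exp(-c(\epsilon)\,t)$ for all $t\geq t_\delta:=(1+\rho+\epsilon)\log(1/\delta)/T^*$. Writing $\mathbb{E}[\tau]=\int_0^\infty P(\tau>t)\,dt$ and splitting at $t_\delta$ then bounds $\mathbb{E}[\tau]\leq t_\delta + O(1)$, from which \eqref{eq:expected-upper-bound-BBMTS} follows on dividing by $\log(1/\delta)$ and sending $\epsilon\downarrow 0$. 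The same non-uniqueness obstacle resurfaces here: the exponential tail bound requires care because the ``good'' event must cover the entire moving target $\mathcal{W}^\star(\boldsymbol{q}_0,\boldsymbol{\mu}_0)$, not a single optimal allocation, and this is the step I expect to be most delicate in the full proof.
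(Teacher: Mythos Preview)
Your overall strategy mirrors the paper's: Part~1 via a self-normalized deviation bound (the paper invokes \cite[Theorem~2]{magureanu2014lipschitz} directly, which is the Chernoff-plus-peeling argument you sketch), Part~2 via tracking plus continuity of the inner infimum and the inversion $\inf\{t: tc_1\geq \log(c_2 t^\alpha)\}=O((\alpha/c_1)\log(c_2/c_1^\alpha))$, and Part~3 by integrating the tail of $\tau$. One minor remark on Part~1: peeling over $(\hat q_{m,k}(t))$ is unnecessary, since these terms cancel between numerator and denominator of \eqref{eq:GLRT-statistic} (cf.\ the proof of Lemma~\ref{lemma:explicit-expression-for-Z-a-b}) and never enter the error event.

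Part~3 has two issues. First, the tail rate you claim is too strong: forced exploration only guarantees $N(t,m)\geq f(t)-1=\Omega(\sqrt{t})$, so Hoeffding on $\hat{\boldsymbol q}(t)$ and $\hat{\boldsymbol\mu}(t)$ yields decay of order $\exp(-c(\epsilon)\sqrt{t})$, not $\exp(-c(\epsilon)t)$. This is still summable, so the conclusion survives, but the stated bound is wrong. Second, and more substantively, ``concentration on the tracking error'' is not a one-shot event: the empirical allocation at time~$t$ lies near $\mathcal{W}^\star(\boldsymbol q_0,\boldsymbol\mu_0)$ only because the \emph{entire trajectory} $\{(\hat{\boldsymbol q}(s),\hat{\boldsymbol\mu}(s))\}_{s< t}$ stayed near the truth (this is how Lemma~\ref{lemma:modified-tracking-with-non-unique-optimal-solutions} is proved). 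The paper therefore defines the good event as an infinite intersection $\mathcal{G}_N=\bigcap_{t\geq N-1}\{\|\hat{\boldsymbol q}(t)-\boldsymbol q_0\|\vee\|\hat{\boldsymbol\mu}(t)-\boldsymbol\mu_0\|\leq\xi(\varepsilon)\}$, re-runs the tracking argument deterministically on $\mathcal{G}_N$ to obtain $\mathcal{G}_N\subset\{\tau\leq N\}$ for all $N$ above an explicit threshold $N_3(\delta,\rho)$, and bounds $\sum_{N\geq N_3} P(\mathcal{G}_N^c)$ by a double sum of $\exp(-c\sqrt{t})$ terms that is finite and independent of~$\delta$. Your single-time formulation $P(\tau>t)\leq\exp(-ct)$ skips this structure, and without it the step from almost-sure to in-expectation has a gap.
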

Notice that $\rho$ serves as a tuneable parameter that may be set to make the upper bound in \eqref{eq:expected-upper-bound-BBMTS} as close to \eqref{eq:lower-bound} as desired. Clearly, the right-hand side of \eqref{eq:expected-upper-bound-BBMTS} matches with the lower bound in~\eqref{eq:lower-bound} as $\rho \downarrow 0$. Hence, Theorems~\ref{prop:lower-bound} \& \ref{thm:performance-of-BBMTS} together imply that $1/T^*(\boldsymbol{q}_0, \boldsymbol{\mu}_0)$ is the optimal asymptotic growth rate of the expected stopping corresponding to the instance $(\boldsymbol{q}_0, \boldsymbol{\mu}_0)$.

\section{Partition Setting}
\label{sec:partition-setting}
In this section, we analyze the simpler setting when the~$K$ arms are {\em partitioned} among the~$M$ boxes and the learner knows $\{\mathcal{A}_m: m \in [M]\}$ but not the arm selection probabilities of the boxes. For this setting, we present an algorithm based on successive elimination, and provide a non-asymptotic, high-probability upper bound on its stopping time. We show that the upper bound is tight in the instance-specific parameters.

In the partition setting, we find it convenient to index an arm by the box it is associated with; the $k$th arm in box~$m$ is denoted by $A_{m,k}$ and its mean reward is denoted by $\mu_{m,k}.$ All arms are assumed to yield $1$-sub-Gaussian rewards. Let $\boldsymbol{q}_0=\{q_{m,k}^{0}: k \in \mathcal{A}_m, \, m \in [M]\}$ and $\boldsymbol{\mu}_0=\{\mu_{m,k}^{0}: k \in \mathcal{A}_m, \, m \in [M]\}$ define the underlying instance. Without loss of generality, let $A_{1,1}$ be the best arm in this instance. We let $\Delta_{m,k} \coloneqq  \mu_{1,1}^{0}-\mu_{m,k}^{0}$ for all $(m,k) \neq (1,1)$, and $\Delta_{1,1} \coloneqq \min_{(m,k)\neq (1,1)} \Delta_{m,k}$ denote the arm sub-optimality gaps.

\subsection{Non-Asymptotic Analysis: Achievability}
We now propose an algorithm based on successive elimination (called the {\em Boxed-Bandit Successive Elimination Algorithm} or BBSEA) and analyze its performance. Before presenting the algorithm, we introduce some notations. Our algorithm proceeds in rounds; we use $n$ to denote the round number and~$t$ to denote the running time. Let $t_{m,k}(n)$ denote the number of times arm $A_{m,k}$ is pulled up to round $n$, and let~$\hat{\mu}_{m,k}(n)$ denote the empirical mean of arm $A_{m,k}$ after~$t_{m,k}(n)$ pulls. In any given round $n$, let $S$ denote the set of {\em active arms} (candidate best arms), $S_m$ the set of {\em active arms} associated with box $m$, and $B$ the set of {\em active boxes} (boxes that are associated with at least one active arm). For a fixed $\delta \in (0,1)$, let $\alpha_\delta(x)\coloneqq \sqrt{\frac{2\log{(8Kx^2/\delta)}}{x}}$. Let ${\rm UCB}_{m,k}(n)$ and ${\rm LCB}_{m,k}(n)$ denote respectively the upper and lower confidence bounds on $\hat{\mu}_{m,k}(n)$ with confidence interval of length $\alpha_\delta(t_{m,k}(n))$, i.e.,
\begin{align}
    {\rm UCB}_{m,k}(n)
    &=\hat{\mu}_{m,k}(n) + \alpha_\delta(t_{m,k}(n)), \label{eq:ucb-expression}\\
    {\rm LCB}_{m,k}(n)
    &=\hat{\mu}_{m,k} (n) -\alpha_\delta(t_{m,k}(n)). 
    \label{eq:lcb-expression}
\end{align}
Let $\hat{a}_{\text{BBSEA}}$ denote the best arm output by BBSEA. We set the initial values $t=0$, $n=0$, $S=[K]$, $B=[M]$, and $S_m=\mathcal{A}_m$ for all $m$. In each round $n$, the following sequence of actions is executed by BBSEA: (i) Each box $m \in B$ is selected until {\em every active arm} associated with box $m$ has been pulled at least $n$ times; for every box selection, time $t$ is incremented by $1$. (ii) With every box selection and arm pull, the values of $t_{m,k}(n)$, $\hat{\mu}_{m,k}(n)$, $\mathrm{UCB}_{m,k}(n)$, and $\mathrm{LCB}_{m,k}(n)$ are updated for all the active arms. (iii) Arm $A_{m,k}$ is eliminated from $S$ in round $n$ if $\mathrm{UCB}_{m,k}(n) \leq \max_{m^\prime, k^\prime} \mathrm{LCB}_{m^\prime, k^\prime}(n)$. The above sequence of actions repeats until only one arm remains in $S$, at which point the algorithm stops and outputs the single arm in $S$ as the best arm. Because $t$ is incremented with every box selection, the stopping time is equal to the total number of box selections. See Algorithm~\ref{algo:non-asymptotic-1} for the pseudo-code of BBSEA.

\begin{algorithm}[!t]
\caption{Boxed-Bandit Successive Elimination Algorithm}
\begin{algorithmic}[1]
\renewcommand{\algorithmicrequire}{\textbf{Input:}}
 \renewcommand{\algorithmicensure}{\textbf{Output:}}
	\REQUIRE {$K$, $M$, $\delta>0$, $\mathcal{A}_m$ for $m \in [M]$}
	\ENSURE $\hat{a}_{\text{BBSEA}} \in [K]$ (best arm).\\
	\textbf{Initialization:} $B=[M]$, $n=0$, $t=0$, \\
    $S_m=\mathcal{A}_m \ \forall m$, $S=\bigcup_{m} S_m$, $\hat{\mu}_{m,k}(0)=0 \ \forall m,k$.
	\WHILE{$|S|> 1$}
        \STATE $n \leftarrow n+1$
    	\STATE For each $m \in B$, select box $m$ until every active arm $A_{m,k}$ in  box $m$ is pulled at least $n$ times. For every box selection, increment $t$ by $1$.
    	\STATE Update $t_{m,k}(n)$, $ \hat{\mu}_{m,k}(n), {\rm UCB}_{m,k}(n)$ and ${\rm LCB}_{m,k}(n)$ for all the active arms.
    	\IF{$\exists A_{m',k'} \in S$ such that ${\rm UCB}_{m,k}(n)<{\rm LCB}_{m',k'}(n)$}
    	    \STATE $S_m \leftarrow S_m\backslash A_{m,k}$,  \quad $S \leftarrow \bigcup_{m\in[M]}S_m$,
		    
		    \STATE $B \leftarrow \{m: S_m\neq \emptyset\}$.
		\ENDIF
  \IF{ $|S|=1$ }
    	    \STATE $\hat{a}_{\text{BBSEA}} \leftarrow a 
\in S$, \quad $S \leftarrow \emptyset$, \quad $B \leftarrow \emptyset$.
		\ENDIF
	\ENDWHILE
\RETURN $\hat{a}_{\text{BBSEA}}$.
\end{algorithmic}
\label{algo:non-asymptotic-1}
\end{algorithm}

	

	

\vspace{0.5em}
\noindent {\bf Performance analysis of BBSEA:} Before we present the results on the performance of BBSEA, we introduce a few useful notations. For each $(m,k)$ pair and $\delta \in (0,1)$, let $\alpha_{m,k} = 1+ \frac{102}{\Delta_{m,k}^2} \log\Big(\frac{64\sqrt{\frac{8K}{\delta}}}{\Delta_{m,k}^2}\Big)
$, and let $T_{m,k}$ denote the round number in which arm~$A_{m,k}$ is eliminated from $S$ (the set of active arms). Furthermore, let $\beta_{m,k}=\frac{1}{q_{m,k}^{0}} \bigg[\alpha_{m,k}\!+\! 2\log\frac{2K}{\delta} + 2 \sqrt{\log\frac{2K}{\delta}\left(\log\frac{2K}{\delta} + \alpha_{m,k}\right)} \bigg]$, and let $\beta_m = \max_{k \in \mathcal{A}_m} \beta_{m,k}$. 

With the above notations in place, we are now ready to state the main result of this section on the performance of BBSEA. 
\begin{theorem}
\label{thm:BBSEA_performance}
Fix $\delta \in (0,1)$, The following hold with probability greater than $1-\delta$.
\begin{enumerate}
\item BBSEA outputs the best arm correctly.
\item The stopping time of BBSEA is $ \leq \sum_{m=1}^{M} \beta_m$.
\end{enumerate}
Furthermore, for any $\pi \in \Pi(\delta)$,  
\begin{equation}
\mathbb{E}\left[\tau_\pi\right] \geq \log \left(\frac{1}{2.4\, \delta}\right) \cdot \sum_{m=1}^{M} \, \max_{k\in \mathcal{A}_m} \frac{1}{q_{m,k}^{0} \, \Delta_{m,k}^2}.
\label{eq:lower-bound-on-expected-no-of-box-selections}
\end{equation}
\end{theorem}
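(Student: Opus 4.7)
The plan is to address the three claims in sequence: sub-Gaussian concentration plus a binomial-tail bookkeeping argument for the two upper bounds, and a change-of-measure reduction for the lower bound. I would begin by introducing the ``good event''
\[
\mathcal{E} \coloneqq \bigcap_{m,k,\,n \geq 1} \bigl\{\bigl|\hat{\mu}_{m,k}(n) - \mu_{m,k}^{0}\bigr| \leq \alpha_\delta(t_{m,k}(n))\bigr\}
\]
and show $P(\mathcal{E}^c) \leq \delta/2$ by invoking the sub-Gaussian Chernoff bound $P(|\hat{\mu}_{m,k} - \mu_{m,k}^{0}| > \alpha_\delta(x)) \leq \delta/(4Kx^2)$ at each fixed pull count $x$ and summing via $\sum_{x \geq 1} x^{-2} = \pi^2/6$ together with $\sum_m |\mathcal{A}_m|=K$. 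On $\mathcal{E}$, the UCB of the best arm $A_{1,1}$ exceeds the LCB of every other arm in every round, so $A_{1,1}$ survives all elimination checks; conversely, any surviving sub-optimal arm is eventually eliminated, which establishes the correctness claim in part~(1).

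For the stopping-time upper bound in part~(2), I would continue to work on $\mathcal{E}$. A direct algebraic inversion of the definition of $\alpha_\delta(\cdot)$ and $\alpha_{m,k}$ shows that as soon as $t_{m,k}(n) \geq \alpha_{m,k}$, the half-width $\alpha_\delta(t_{m,k}(n))$ is small enough to force $\mathrm{UCB}_{m,k}(n) < \mathrm{LCB}_{1,1}(n)$, hence $T_{m,k} \leq \alpha_{m,k}$. The next step is to translate ``$A_{m,k}$ has accumulated $\alpha_{m,k}$ pulls'' into ``box $m$ has been selected at most $\beta_{m,k}$ times.'' Conditional on $N$ selections of box $m$, the pulls of $A_{m,k}$ form $\mathrm{Binomial}(N, q_{m,k}^{0})$, and an inverse multiplicative Chernoff bound shows that the choice $N = \beta_{m,k}$ suffices with probability at least $1 - \delta/(2K)$; this is precisely how $\beta_{m,k}$ is defined. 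A union bound over the $K$ arms costs another $\delta/2$, and on the resulting event each box $m$ is selected at most $\beta_m = \max_{k \in \mathcal{A}_m} \beta_{m,k}$ times before all its arms have been eliminated. Summing over $m$ yields the claimed bound $\sum_{m} \beta_m$.

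For the converse in part~(3), I would run a standard change-of-measure argument. For each suboptimal $(m,k)$, construct an alternative $\boldsymbol{\lambda}^{(m,k)} \in \textsc{Alt}(\boldsymbol{\mu}_0)$ by lifting $\mu_{m,k}^{0}$ just above $\mu_{1,1}^{0}$ while keeping $\boldsymbol{q}_0$ and the other arm means fixed. Since only $A_{m,k}$'s distribution is perturbed, the transportation lemma of \cite{kaufmann2016complexity} reduces to
\[
\mathbb{E}\bigl[N_{m,k}(\tau_\pi)\bigr] \cdot \mathrm{KL}\bigl(\nu_{m,k}^{0}, \nu_{m,k}^{\mathrm{alt}}\bigr) \geq \mathrm{kl}(\delta, 1-\delta).
\]
A Wald-identity computation using the box-to-arm randomisation gives $\mathbb{E}[N_{m,k}(\tau_\pi)] = q_{m,k}^{0}\,\mathbb{E}[N_m(\tau_\pi)]$, and the Gaussian KL formula together with the inequality $\mathrm{kl}(\delta,1-\delta) \geq \log(1/(2.4\delta))$ then yields $\mathbb{E}[N_m(\tau_\pi)] \geq \log(1/(2.4\delta))/(q_{m,k}^{0}\,\Delta_{m,k}^2)$ for each $k \in \mathcal{A}_m$. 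Maximising over $k$, summing over $m$, and letting the perturbation vanish gives \eqref{eq:lower-bound-on-expected-no-of-box-selections}.

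The hard part will be the binomial-to-box bookkeeping in the achievability half: because BBSEA keeps selecting a box until \emph{every} still-active arm in that box has $n$ pulls, the per-round selection count is governed by the minimum-probability active arm, and the constants inside $\beta_{m,k}$ must be tuned so that the total failure budget across all $(m,k)$-level inverse-binomial tails stays below $\delta/2$. A secondary delicate point is verifying that the Gaussian alternative used in the converse lies in the assumed $1$-sub-Gaussian class, so that the transportation lemma applies with the tight KL expression $\Delta_{m,k}^2/2$ used above.
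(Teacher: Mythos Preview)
Your proposal is correct and follows essentially the same approach as the paper: a concentration event $\mathcal{E}_1$ (your $\mathcal{E}$) controlling all confidence intervals with failure probability $\delta/2$, a second event $\mathcal{E}_2$ handling the binomial box-to-arm conversion via an inverse Chernoff bound with another $\delta/2$ budget, and the standard transportation-lemma change of measure for the converse. The two delicacies you flag---the per-box selection count being driven by the hardest active arm, and the sub-Gaussian-versus-Gaussian issue in the KL constant---are exactly the points the paper handles (or, in the latter case, elides) in the same way.
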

Notice that 
\begin{align}
    \beta_{m,k} &=O\Bigg(\frac{1}{q_{m,k}^0\, \Delta_{m,k}^2}\, \log \left(\frac{K}{\delta\Delta_{m,k}}\right)\Bigg), \label{eq:order-beta-m-k}\\
    \sum_{m=1}^{M} \beta_{m} &= \sum_{m=1}^{M} O\Bigg(\max_{k \in \mathcal{A}_m} \frac{1}{q_{m,k}^{0} \, \Delta_{m,k}^2} \log \left(\frac{K}{\delta\Delta_{m,k}}\right)\Bigg). \label{eq:order-of-upper-bound}
\end{align}
It is easy to see that the high-probability, instance-dependent upper bound on the stopping time of BBSEA given by \eqref{eq:order-of-upper-bound} matches with the instance-dependent lower bound in \eqref{eq:lower-bound-on-expected-no-of-box-selections} order-wise in~$\delta$ and the instance-specific parameters $(\boldsymbol{q}_0, \boldsymbol{\mu}_0)$. These bounds are hence tight up to multiplicative factors.

\section{Concluding Remarks}

We studied best arm identification in a multi-armed bandit when the learner's access to the arms is via exploration intermediaries that we refer to as boxes, and
the arm access probabilities of the boxes are unknown to the learner.
The key challenge we addressed in the analysis is the non-uniqueness of optimal allocations in the information theoretic lower bound. We showed that by tracking the empirical average of \emph{arbitrarily chosen} optimal allocations corresponding to running estimates of the problem instance, asymptotic optimality can be achieved. An interesting direction for future work is to develop efficient algorithms that admit non-asymptotic upper bounds; in this paper, we only do this in the special case where the arms are partitioned across the boxes. The main challenge here is to utilize estimates of arm selection probabilities of the different boxes to guide box selection. Given that any particular arm may be accessible via multiple boxes (with potentially different probabilities), it is not clear at the outset which box must be selected to maximise the chances of pulling a given arm, and therefore algorithms like LUCB do not generalize directly to our setting. 

\vspace{0.3\baselineskip}
{\noindent {\bf Acknowledgements:} Kota Srinivas Reddy was supported by the Department of Science and Technology (DST), Govt. of India, through the INSPIRE faculty fellowship. \\
P. N. Karthik was supported by the National University of Singapore via grant A-0005994-01-00. \\
Nikhil Karamchandani and Jayakrishnan Nair acknowledge support from DST via grant CRG/2021/002923 and two SERB MATRICS grants.}

\bibliographystyle{IEEEtran}
\bibliography{references}

\appendix
\appendices
Given real-valued vectors $u, v$ of identical sizes and a compact, convex set $C$, the following distance measures feature in our proofs: $d_{\infty}(u,v) \coloneqq \max_{i} |u_i - v_i|$, $d_{\infty}(u,C) \coloneqq \min_{v \in C} d_{\infty}(u,v)$, and $\| u - v \| \coloneqq \sqrt{\sum_i (u_i - v_i)^2}$.

\section{Proof of Lemma \ref{lemma:berges-plus-convexity}}
\label{appndx:proof-of-berges-plus-convexity}
We first note that for all $w \in \Sigma_M$,
\begin{align}
     &\psi(\boldsymbol{q}_0, \boldsymbol{\mu}_0, w) = \inf_{\boldsymbol{\lambda} \in \textsc{Alt}(\boldsymbol{\mu}_0)}\, \sum_{m=1}^{M} \sum_{k=1}^{K} w_m\,  q_{m,k}^{0} \, \frac{(\mu_{k}^{0}- \lambda_{k})^2}{2} \\
     &=\min_{k \neq a*(\boldsymbol{\mu}_0)}\frac{w_k^A \, w_{a^*(\boldsymbol{\mu}_0)}^A}{w_k^A+w_{a^*(\boldsymbol{\mu}_0)}^A}\frac{(\mu_{k}^{0}- \mu_{a^*({\boldsymbol{\mu}}_0)}^{0})^2}{2},
     \label{eq:psi-of-mu-and-w}
\end{align}
where $w_a^A=\sum_{m=1}^{M}w_m \, q_{m,a}^{0}$ for all  $a \in [K]$. A proof of this simply follows along the lines of \cite[Lemma 3]{kaufmann2016complexity}. From the above, it is clear that $\psi(\boldsymbol{q}_0, \boldsymbol{\mu}_0, \cdot)$ is continuous.
A simple application of the Berge's maximum theorem \cite[p. 84]{hu1997handbook} then yields that $(q,\mu) \mapsto \mathcal{W}^\star(\boldsymbol{q}, \boldsymbol{\mu})$ is upper-hemicontinuous for all $(\boldsymbol{q}, \boldsymbol{\mu})$ such that $a^\star(\boldsymbol{\mu})$ is unique.

We now show that $\mathcal{W}^\star(\boldsymbol{q}, \boldsymbol{\mu})$ is convex for all $(\boldsymbol{q}, \boldsymbol{\mu})$. Fix $(\boldsymbol{q}, \boldsymbol{\mu})$ and pick $w^{(1)}, w^{(2)} \in \mathcal{W}^\star(\boldsymbol{q}, \boldsymbol{\mu})$ and $\alpha \in [0,1]$ arbitrarily.
By definition, $$ \psi(\boldsymbol{q}, \boldsymbol{\mu}, w^{(1)}) = T^\star(\boldsymbol{q}, \boldsymbol{\mu}), \quad \psi(\boldsymbol{q}, \boldsymbol{\mu}) = T^\star(\boldsymbol{q}, \boldsymbol{\mu}). $$
We then have
\begin{align*}
    & \psi(\boldsymbol{q}, \boldsymbol{\mu}, \alpha w^{(1)} + (1-\alpha) w^{(2)}) \\
    &= \inf_{\boldsymbol{\lambda}\in \textsc{Alt}(\boldsymbol{\mu})}\!  \sum_{m=1}^{M} \sum_{k=1}^{K} \ (\alpha w_m^{(1)} + (1-\alpha) w_m^{(2)}) \,  q_{m,k}\, \frac{(\mu_{k}- \lambda_{k})^2}{2} \\
    &\geq \alpha  \inf_{\boldsymbol{\lambda} \in \textsc{Alt}(\boldsymbol{\mu})}\  \sum_{m=1}^{M} \sum_{k=1}^{K} \ w_m^{(1)} \,  q_{m,k}\, \frac{(\mu_{k}- \lambda_{k})^2}{2} \nonumber\\
    &+ (1-\alpha) \inf_{\boldsymbol{\lambda} \in \textsc{Alt}(\boldsymbol{\mu})}\  \sum_{m=1}^{M} \sum_{k=1}^{K} \ w_m^{(2)} \,  q_{m,k}\, \frac{(\mu_{k}- \lambda_{k})^2}{2}\\
    &=\alpha \, T^\star(\boldsymbol{q}, \boldsymbol{\mu}) + (1-\alpha)\, T^\star(\boldsymbol{q}, \boldsymbol{\mu})\\
    &=T^\star(\boldsymbol{q}, \boldsymbol{\mu}).
\end{align*}
On the other hand, because $\Sigma_M$ is convex, $\alpha w^{(1)} + (1-\alpha)w^{(2)} \in \Sigma_M$, and hence $\psi(\boldsymbol{q}, \boldsymbol{\mu}, \alpha w^{(1)} + (1-\alpha)w^{(2)}) \leq T^\star(\boldsymbol{q}, \boldsymbol{\mu}) $, thus implying that $\psi(\boldsymbol{q}, \boldsymbol{\mu}, \alpha w^{(1)} + (1-\alpha) w^{(2)}) = T^\star(\boldsymbol{q}, \boldsymbol{\mu})$. This proves that $\alpha w^{(1)} + (1-\alpha) w^{(2)} \in \mathcal{W}^\star(\boldsymbol{q}, \boldsymbol{\mu})$. Noting that $w^{(1)}, w^{(2)} \in \mathcal{W}^\star(\boldsymbol{q}, \boldsymbol{\mu})$ and $\alpha \in [0,1]$ are arbitrary, we arrive at the desired result.

\section{Proof of Lemma \ref{lemma:modified-tracking-with-non-unique-optimal-solutions}}
\label{appndx:proof-of-forced-exploration-lemma}
Let $C\coloneqq \mathcal{W}^\star(\boldsymbol{q}_0, \boldsymbol{\mu}_0)$. The key idea in the proof is to track the average $\bar{w}(t) \coloneqq \frac{1}{t} \sum_{s=1}^{t} w(s)$ and show that this is eventually within a neighborhood of $C$. We provide only the high-level arguments here. Fix an arbitrary $\varepsilon>0$. By virtue of the sampling rule in \eqref{eq:sampling-rule}, we have $N(t,m)=\Omega(\sqrt{t})$ a.s. for all $m$. This implies, by the strong law of large numbers, that $\|\hat{\boldsymbol{q}}(t) - \boldsymbol{q}_0 \| \to 0$ a.s., $\|\hat{\boldsymbol{\mu}}(t) - \boldsymbol{\mu}_0 \| \to 0$ a.s. as $t \to \infty$. Because $w(t+1) \in \mathcal{W}(\hat{\boldsymbol{q}}(t), \hat{\boldsymbol{\mu}}(t))$ for all $t$, and $\mathcal{W}(\hat{\boldsymbol{q}}(t), \hat{\boldsymbol{\mu}}(t))$ is ``close'' to $C$ for all $t$ large (thanks to the upper-hemicontinuity of the mapping $(\boldsymbol{q}, \boldsymbol{\mu}) \mapsto \mathcal{W}^\star(\boldsymbol{q}, \boldsymbol{\mu})$), we get that there exists $t_0=t_0(\varepsilon)$ such that $d_{\infty}(w(t), C) \leq \varepsilon$ for all $t \geq t_0$. If $v(t) \coloneqq \arg\min_{w \in C} d_{\infty}(w(t), w)$ denotes the projection of $w(t)$ onto $C$, then we have $d_{\infty}(w(t), v(t)) \leq \varepsilon$ for all $t \geq t_0$. Letting $\bar{v}(t) \coloneqq \frac{1}{t} \sum_{s=1}^{t} v(s)$, it is easy to show that $\|\bar{w}(t) - \bar{v}(t)\| \leq 9\varepsilon$ for all $t \geq t_0^\prime=t_0/(8\varepsilon)$. Letting $\hat{w}(t) \coloneqq \arg\min_{w \in C} d_{\infty}(\bar{w}(t), w)$, it then follows that $d_{\infty}(\bar{w}(t), \hat{w}(t)) \leq 9\varepsilon$ for all $t \geq t_0^\prime$.

Suppose that $\varepsilon_m(t) \coloneqq N(t,m) - t \hat{w}_m(t)$ for all $m,t$. We show below that there exists $t_0^{\prime\prime}=t_0^{\prime\prime}(\varepsilon)$ such that $\{B_{t+1}=m\} \subset \{\varepsilon_m(t) \leq 9t\varepsilon\}$ for all $t \geq t_0^{\prime\prime}$. Assuming that this is true, notice that $\varepsilon_m(t)$ satisfies the recursive relation $\varepsilon_m(t+1) = \varepsilon_m(t) + \mathbf{1}_{\{B_{t+1}=m\}} - \hat{w}_m(t+1)$. For all $t \geq t_0^{\prime\prime}$, we have $\varepsilon_m(t+1) \leq  \varepsilon_m(t) + \mathbf{1}_{\{\varepsilon_m(t) \leq 9t\varepsilon\}} - \hat{w}_m(t+1)$. We now prove by induction that $\varepsilon_m(t) \leq \max\{\varepsilon_m(t_0^{\prime\prime}), 9t\varepsilon+1\}$ for all $t \geq t_0^{\prime\prime}$ and $m$. Clearly, this holds for $t=t_0^{\prime\prime}$.
If $\varepsilon_m(t) \leq 9t\varepsilon$, then
\begin{align*}
        \varepsilon_m(t+1) 
        &\leq \varepsilon_m(t) + \mathbf{1}_{\{\varepsilon_m(t) \leq 9t\varepsilon\}} - \hat{w}_m(t+1) \nonumber\\\
        &\leq 9t\varepsilon + 1 - \hat{w}_m(t+1) \nonumber\\
        &\leq 9t\varepsilon + 1 \nonumber\\
        &\leq \max\{\varepsilon_m(t_0^{\prime\prime}), \ 9t\varepsilon + 1\}\nonumber\\
        &\leq \max\{\varepsilon_m(t_0^{\prime\prime}), \ 9(t+1)\varepsilon + 1\} \quad \forall m \in [M],
    \end{align*}
whereas if $\varepsilon_m(t) > 9t\varepsilon$, then
 \begin{align*}
        \varepsilon_m(t+1) 
        &\leq \varepsilon_m(t) - \hat{w}_m(t+1) \nonumber\\
        &\leq \max\{\varepsilon_m(t_0^{\prime\prime}), \ 9t\varepsilon + 1\} - \hat{w}_m(t+1) \nonumber\\
        &\leq \max\{\varepsilon_m(t_0^{\prime\prime}), \ 9t\varepsilon + 1\} \nonumber\\
        &\leq \max\{\varepsilon_m(t_0^{\prime\prime}), \ 9(t+1)\varepsilon + 1\} \quad \forall m \in [M].
    \end{align*}
Because $\sum_m \varepsilon_m(t) = 0$, we have 
\begin{align*}
   \varepsilon_m(t) 
   &= -\sum_{m^\prime \neq m} \varepsilon_{m^\prime}(t) \\
   &\geq -(M-1)\max_{m^\prime}\max\{\varepsilon_{m^\prime}(t_0^{\prime\prime}), \ 9(t+1)\varepsilon + 1\} 
\end{align*}
for all $m$. Thus, it follows that
\begin{align*}
    |\varepsilon_m(t)| 
    &\leq (M-1)\max_{m^\prime}\max\{\varepsilon_{m^\prime}(t_0^{\prime\prime}), \ 9t\varepsilon + 1\} \nonumber\\
    &\leq (M-1)\max\{t_0^{\prime\prime}, \ 9t\varepsilon + 1\} \nonumber\\
    &= (M-1) \, t \, \max\left\lbrace \frac{t_0^{\prime\prime}}{t}, \ 9\varepsilon + \frac{1}{t} \right\rbrace \nonumber\\
    &\leq 10(M-1)\,t\,\varepsilon
\end{align*}
for all $t \geq t_1=\frac{1}{\varepsilon}\max\{t_0^{\prime\prime}, 10\}$, which in turn yields
\begin{align*}
     d_{\infty}((N(t,m)/t)_{m \in M}, C)
    &\leq d_{\infty}((N(t,m)/t)_{m \in M}, \hat{w}(t))\nonumber\\
    &=\max_{m} \left\lvert  \frac{N(t, m)}{n} - \hat{w}_m(t)\right\rvert \nonumber\\
    & = \max_{m} \frac{|\varepsilon_m(t)|}{t} \nonumber\\
    & \leq 10(M-1) \, \varepsilon.
\end{align*}
Noting that $\varepsilon$ is arbitrary, we arrive at the desired result. 

It then remains to show that $\{B_{t+1}=m\} \subset \{\varepsilon_m(t) \leq 9t\varepsilon\}$ for all $m$ and large $t$. From the sampling rule in \eqref{eq:sampling-rule}, it is clear that $\{B_{t+1}=m\} \subset \mathcal{E}_1(t) \cup \mathcal{E}_2(t)$, where
\begin{align*}
    \mathcal{E}_1(t) \!&=\!\left\lbrace m\!=\! \arg\!\min_{\!m^\prime \in \text{supp} \left(\sum_{s=1}^{t} w(s)\right)} \! N(t, m^\prime) - \sum_{s=1}^{t} w_{m^\prime}(s)\right\rbrace,\\
    \mathcal{E}_2(t) &= \left\lbrace \min_{m^\prime \in [M]} N(t, m^\prime) < f(t) \text{ and }m=i_t \right\rbrace.
\end{align*}
Suppose that $\mathcal{E}_1(t)$ holds. Then, for all $t \geq t_0^\prime$ (with $t_0^\prime$ as defined earlier in the proof), we have
\begin{align*}
    \varepsilon_m(t) 
    &= N(t, m) - t\, \hat{w}_m(t) \nonumber\\
    &= N(t, m) - n\, \bar{w}_m(t) + t(\bar{w}_m(t) - \hat{w}_m(t)) \nonumber\\
    &\leq N(t, m) - n\, \bar{w}_m(t) + t d_{\infty}(\bar{w}(t), \hat{w}(t)) \nonumber\\
    &\leq N(t, m) - t\, \bar{w}_m(t) + 9t \varepsilon \nonumber\\
    &\stackrel{(a)}{=} \min_{m^\prime \in \text{supp} \left(\sum_{s=1}^{t} w(s)\right)} N(t, m^\prime) - t\, \bar{w}_{m^\prime}(t) + 9t\varepsilon \nonumber\\
    &\leq 9t\varepsilon,
\end{align*}
where $(a)$ above follows by noting that $\mathcal{E}_1(t)$ holds, and the last inequality follows by noting that \begin{align*}
    \sum_{m^\prime \in \text{supp} \left(\sum_{s=1}^{t} w(s)\right)} N(t, m^\prime) - t\, \bar{w}_{m^\prime}(t) \leq 0.
\end{align*}
Along similar lines as above, it can be shown that $\varepsilon_m(t) \leq 9t\varepsilon$ whenever $\mathcal{E}_2(t)$ holds. This completes the proof. 

\section{Proof of Lemma \ref{lemma:explicit-expression-for-Z-a-b}}
Given $(\boldsymbol{q}, \boldsymbol{\mu})$ and $t \geq 1$, the log-likelihood of all the box selections, arm pulls, and observations from the arms up to $t$ under the instance $C=(\boldsymbol{q}, \boldsymbol{\mu})$ is given by
\begin{align}
    Z(t)
    &=\log P(B_1) + \sum_{s=2}^{t} \log P(B_s|\mathcal{F}_{s-1})\nonumber\\
    &+\sum_{m=1}^{M} \sum_{k=1}^{K} N(t, m, k) \log q_{m,k} \nonumber\\
    &+ \sum_{m=1}^{M} \sum_{k=1}^{K} \sum_{s=1}^{t} \mathbf{1}_{\{A_s=k\}} \log \left(\beta \exp\left(-\frac{(X_s-\mu_{k})^2}{2}\right)\right),
    \label{eq:LL-1}
\end{align}
where $\beta$ is a normalisation constant of the Gaussian distribution with variance $1$.
Noting that the first two terms in \eqref{eq:LL-1} do not depend on $C$ (as any policy $\pi$ is agnostic to the knowledge of the underlying problem instance), it follows from \eqref{eq:LL-1} that 
\begin{align}
    &\sup_{C=(\boldsymbol{q}, \boldsymbol{\mu}): \mu_a \geq \mu_b} \exp(Z_C(t)) \nonumber\\
    &\propto \sup_{C=(\boldsymbol{q}, \boldsymbol{\mu}): \mu_a \geq \mu_b} \exp \bigg(\sum_{m=1}^{M}\ \sum_{k=1}^{K}\ N(t, m, k) \log q_{m,k}\nonumber\\
    &\hspace{3cm}-\sum_{k=1}^{K}\ \sum_{s=1}^{t} \mathbf{1}_{\{A_s=k\}} \frac{(X_s-\mu_{k})^2}{2}\bigg) \nonumber\\
    &= \exp \bigg( \sum_{m=1}^{M}\ \sum_{k=1}^{K}\ N(t, m, k) \log \hat{q}_{m,k}(t) \nonumber\\
    &\hspace{1cm}- \inf_{\mu: \mu_a \geq \mu_b} \sum_{k=1}^{K}\ \sum_{s=1}^{t} \mathbf{1}_{\{A_s=k\}} \frac{(X_s-\mu_{k})^2}{2} \bigg),
    \label{eq:proof-of-explicit-expression-1}
\end{align}
where $\{\hat{q}_{m,k}(t)\}_{m,k}$ are the empirical estimates of the box probabilities at time $t$.
We now note that the minimisation
\begin{equation}
     \inf_{\boldsymbol{\mu}: \mu_a \geq \mu_b} \sum_{k=1}^{K}\ \sum_{s=1}^{t} \mathbf{1}_{\{A_s=k\}} \frac{(X_s-\mu_{k})^2}{2}
     \label{eq:convex-program}
\end{equation}
is a simple convex program whose optimal solution can be shown to be
\begin{equation}
    \mu_k^\prime = 
    \begin{cases}
        \hat{\mu}_k(t), & \text{if } \hat{\mu}_a(t) >   \hat{\mu}_b(t), \\
        \hat{\mu}_{a,b}(t), & \text{otherwise}
    \end{cases} \quad \text{for } k=a,b,
    \label{eq:overall-optimal-solution-1}
\end{equation}
and $\mu_k^\prime = \hat{\mu}_k(t)$ for all $k \neq a,b$. Similarly, it can be shown that the optimal solution to $$\inf_{\boldsymbol{\mu}: \mu_a \leq \mu_b} \sum_{k=1}^{K}\ \sum_{s=1}^{t} \mathbf{1}_{\{A_s=k\}} \frac{(X_s-\mu_{k})^2}{2},$$
say $\mu^{\prime\prime} = (\mu_k^{\prime\prime}: k \in [K])$, is
\begin{equation}
    \mu_k^{\prime\prime} = 
    \begin{cases}
        \hat{\mu}_k(t), & \text{if } \hat{\mu}_a(t) <   \hat{\mu}_b(t), \\
        \hat{\mu}_{a,b}(t), & \text{otherwise}
    \end{cases} \quad \text{for } k=a,b,
    \label{eq:overall-optimal-solution-2}
\end{equation}
and $\mu_k^{\prime\prime} = \hat{\mu}_k(t)$ for all $k \neq a,b$.
We now prove Lemma~\ref{lemma:explicit-expression-for-Z-a-b}. Suppose that $\hat{\mu}_a(t) \geq \hat{\mu}_b(t)$. Then, noting that the first term inside the exponential in \eqref{eq:proof-of-explicit-expression-1} is common to both the numerator and the denominator terms of the generalised log-likelihood ratio \eqref{eq:GLRT-statistic} (and hence cancels out), we get
\begin{align}
    Z_{a, b}(t) 
    &= -\sum_{k=1}^{K}\ \sum_{s=1}^{t} \mathbf{1}_{\{A_s=k\}} \frac{(X_s-\mu_k^{\prime})^2}{2} \nonumber\\
    &\hspace{0.7cm}+ \sum_{k=1}^{K}\ \sum_{s=1}^{t} \mathbf{1}_{\{A_s=k\}} \frac{(X_s-\mu_k^{\prime\prime})^2}{2} \nonumber\\
    &= \sum_{s=1}^{t} \mathbf{1}_{\{A_s=a\}} \frac{(X_s-\hat{\mu}_{a, b}(t))^2 - (X_s-\hat{\mu}_{a}(t))^2}{2} \nonumber\\
    &\hspace{0.7cm}+ \sum_{s=1}^{t} \mathbf{1}_{\{A_s=b\}} \frac{(X_s-\hat{\mu}_{a, b}(t))^2 - (X_s-\hat{\mu}_{b}(t))^2}{2} \nonumber\\
    & = N_a(t) \, \frac{\left( \hat{\mu}_a(t) - \hat{\mu}_{a, b}(t) \right)^2}{2} \nonumber\\
    &\hspace{0.7cm}+ N_b(t) \, \frac{\left( \hat{\mu}_b(t) - \hat{\mu}_{a, b}(t) \right)^2}{2}.
    \label{eq:proof-of-explicit-expression-3}
\end{align}
When $\hat{\mu}_a(t) \leq \hat{\mu}_b(t)$, it can be shown along similar lines as above that $Z_{a, b}(t) = -Z_{b, a}(t)$, where $Z_{b, a}(t)$ is as in \eqref{eq:proof-of-explicit-expression-3} with the roles of $a$ and $b$ interchanged.

\section{Proof of Theorem \ref{thm:performance-of-BBMTS}}
Throughout the proof, we use the symbol $\pi$ to denote $\pi_{\text{BBMTS}(\delta, \rho)}$ for fixed parameters $\delta,\rho$. Let $\hat{a}$ denote the best arm ouput by BBMTS at stoppage.

{\em Proof of part 1:} Let
\begin{align*}
    \mathcal{E}_1 &= \left\lbrace \exists t \geq 1: Z(t) \geq \zeta(t, \delta, \rho) \ \text{and} \min_{k \in [K]} N_k(t) > 0 \right\rbrace,\\
    \mathcal{E}_2 &= \{\mu_{\hat{a}} < \mu_{a^\star(\boldsymbol{\mu}_0)}\}.
\end{align*}
Recall that $\hat{a}(t) \in \arg\max_{a \in [K]} \hat{\mu}_a(t)$ denotes the estimate of the best arm at time $t$, and $\hat{a}(\tau_\pi) = \hat{a}$. We then have
\begin{align}
    &P(\mathcal{E}_1 \cap \mathcal{E}_2) \nonumber\\
    &\leq  \sum_{t=1}^{\infty}P\left( Z(t) \geq \zeta(t, \delta, \rho), \, \min_{k} N_k(t) \!>\! 0, \, \mu_{\hat{a}(t)} \!<\! \mu_{a^\star(\boldsymbol{\mu}_0)} \right) \nonumber\\
    &\leq  \sum_{t=1}^{\infty} P\left(Z_{\hat{a}(t), a^\star(\boldsymbol{\mu}_0)} \geq \zeta(t, \delta, \rho),  \ \min_k N_k(t)>0\right) \nonumber\\
    &= \sum_{t=1}^{\infty} \ P\left(\sum_{k=1}^{K} N_k(t)\, \frac{(\hat{\mu}_k(t) - \mu_k)^2}{2} \geq \zeta(t, \delta, \rho)\right) \nonumber\\
    &\stackrel{(a)}{\leq} \sum_{t=1}^{\infty} e^{K+1} \left(\frac{\zeta(t, \delta, \rho)^2\, \log t}{K}\right)^K \, e^{-\zeta(t, \delta, \rho)} \nonumber\\
    &\leq \delta,
    \label{eq:proof-of-delta-PAC-property-1}
\end{align}
where $(a)$ above is due to \cite[Theorem 2]{magureanu2014lipschitz} and the last line follows from the definition of $\zeta(t, \delta, \rho)$. This completes the proof of part 1.

{\em Proof of part 2:} 
We use the below technical lemma from \cite{garivier2016optimal} in the proof.
\begin{lemma}
\label{lemma:technical-lemma}\cite[Lemma 18]{garivier2016optimal}
For any two constants $c_1>0, c_2>0$ such that $c_2/c_1 \geq 1$ and $\alpha \geq 1$, 
\begin{equation*}
    \inf\{t \geq 1: t c_1 \geq \log(c_2 t^\alpha)\} \leq \frac{\alpha}{c_1} \left(\log\frac{c_2 e}{c_1^\alpha} + \log\log \frac{c_2}{c_1^\alpha}\right).
\end{equation*}
\end{lemma}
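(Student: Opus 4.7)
The plan is to verify the inequality by direct substitution: plug in the candidate
\[
t^* \coloneqq \frac{\alpha}{c_1}\left(\log\frac{c_2 e}{c_1^\alpha} + \log\log\frac{c_2}{c_1^\alpha}\right)
\]
into the constraint $t c_1 \geq \log(c_2 t^\alpha)$ and verify it holds; then the infimum over qualifying $t$ is bounded above by $t^*$. Since $t^*$ depends continuously on the data and the constraint is satisfied for all sufficiently large $t$ (the LHS grows linearly while the RHS grows logarithmically), it only remains to show $t^* c_1 \geq \log(c_2 (t^*)^\alpha)$.

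First, I would introduce the substitution $u \coloneqq \log(c_2/c_1^\alpha)$, which is positive under the implicit hypothesis $c_2 > c_1^\alpha$ needed for the $\log\log$ term to be defined (note that $c_2/c_1 \geq 1$ combined with $\alpha \geq 1$ is not sufficient in general, but the statement implicitly assumes the right-hand side is a real number). With this substitution,
\[
\log\frac{c_2 e}{c_1^\alpha} = u+1, \qquad \log\log\frac{c_2}{c_1^\alpha} = \log u,
\]
so that $t^* c_1 = \alpha(u + 1 + \log u)$. On the other hand, expanding $\log(c_2 (t^*)^\alpha) = \log c_2 + \alpha \log t^*$ using $\log c_2 = u + \alpha \log c_1$ and the definition of $t^*$, the target inequality $t^* c_1 \geq \log(c_2 (t^*)^\alpha)$ reduces, after division by $\alpha$, to
\[
\frac{\alpha-1}{\alpha}\, u + 1 - \log\alpha + \log u \;\geq\; \log(u + 1 + \log u),
\]
or equivalently, exponentiating both sides,
\[
\frac{e\, u}{\alpha}\, \exp\!\Bigl(\tfrac{\alpha-1}{\alpha}\, u\Bigr) \;\geq\; u + 1 + \log u.
\]

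The main obstacle is verifying this last inequality uniformly in $\alpha \geq 1$ and $u$ in the relevant range (we may assume $u \geq 1$, since otherwise $t^*$ is at most $\alpha/c_1$ and a direct check handles the edge case). I would first dispatch the baseline case $\alpha = 1$, where the inequality collapses to $(e-1)\, u \geq 1 + \log u$, which follows from the elementary bound $\log u \leq u - 1$ together with $e - 1 > 1$. For $\alpha > 1$, the factor $\exp((\alpha-1)u/\alpha) \geq 1$ only strengthens the left-hand side compared to $e u/\alpha$, so one needs the factor $1/\alpha$ not to hurt too much; this is handled by observing that the function $\alpha \mapsto (1/\alpha)\exp((\alpha-1)u/\alpha)$ is nondecreasing in $\alpha$ for $u \geq 1$ (its logarithmic derivative in $\alpha$ is $u/\alpha^2 - 1/\alpha$, which is nonnegative iff $u \geq \alpha$; in the complementary regime one uses a separate bound of the form $u + 1 + \log u \leq 2u$ together with $e^{(\alpha-1)u/\alpha} \geq 1 + (\alpha-1)u/\alpha$). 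Combining these cases completes the verification. The whole argument is elementary once the substitution is in place; the only finesse is keeping track of the regimes in which the various bounds on $\log$ and $\exp$ are tight enough.
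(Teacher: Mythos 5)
The paper never proves this lemma --- it is a pure citation of Garivier--Kaufmann --- so your verification-by-substitution is the natural strategy, and your algebraic reduction is correct: with $u = \log(c_2/c_1^\alpha)$ and $t^* = \frac{\alpha}{c_1}(u+1+\log u)$, the target $c_1 t^* \geq \log\bigl(c_2 (t^*)^\alpha\bigr)$ is indeed equivalent to
\[
\frac{e\,u}{\alpha}\,\exp\!\left(\frac{\alpha-1}{\alpha}\,u\right) \;\geq\; u + 1 + \log u .
\]
The genuine gap is your ``complementary regime'' $1 \leq u < \alpha$. Your proposed patch there needs $\frac{e}{\alpha}\bigl(1+\frac{(\alpha-1)u}{\alpha}\bigr) \geq 2$ (after bounding $u+1+\log u \leq 2u$ and $e^x \geq 1+x$), but for $\alpha=20$, $u=2$ that left side is about $0.39$. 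Worse, no patch can exist, because the reduced inequality itself is false there: at $\alpha = 20$, $u = 2$ its left side is $\approx 1.82$ against $\approx 3.69$ on the right. This lifts to a counterexample to the lemma as stated: take $c_1=1$, $c_2=e^2$, $\alpha=20$ (so $c_2/c_1 \geq 1$, $\alpha \geq 1$, $u=2$). The lemma's right-hand side is $20(3+\log 2)\approx 73.9$, yet $h(t) = t - 2 - 20\log t$ is negative on all of $[1,92]$ and first crosses zero near $t \approx 92.5$, so the infimum exceeds $92 > 73.9$. The explanation is upstream of your proof: the original Lemma 18 of Garivier--Kaufmann is stated for $\alpha \in [1, e/2]$ (this is precisely where the odd constant $e/2$ in that paper comes from), and the present paper's citation silently drops the restriction. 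Under $\alpha \leq e/2$ your own bounds close the argument in one line, with no case split at all: $e/\alpha \geq 2$, hence $\frac{eu}{\alpha}e^{(\alpha-1)u/\alpha} \geq 2u \geq u+1+\log u$ for every $u>0$.

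Two smaller problems, and one thing you got right. Your dismissal of $u<1$ (``a direct check handles the edge case'') is not salvageable: for $c_1=1$, $c_2=e^{0.1}$, $\alpha=1$ the lemma's right-hand side equals $1.1+\log(0.1)<0$, while the infimum over $t\geq 1$ is trivially at least $1$; one genuinely needs a hypothesis of the form $c_2 \geq e\,c_1^\alpha$ (i.e.\ $u \geq 1$). Relatedly, even when the reduced inequality holds, concluding $\inf\{t\geq 1: \cdots\} \leq t^*$ requires $t^* \geq 1$, which the stated hypotheses do not imply: for $c_1=10$, $\alpha=1$, $c_2=10e$ one gets $t^*=0.2$, the verification goes through, and yet the infimum is $1 > t^*$. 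On the positive side, your monotonicity argument in the regime $u \geq \alpha$ is correct, and it is exactly what this paper needs: the lemma is invoked with $c_1,\alpha$ fixed and $c_2 = C/\delta \to \infty$, so $u \geq \alpha$ holds for all sufficiently small $\delta$, and that branch of your argument legitimately justifies the paper's use of $\alpha = 1+\rho$ even beyond $e/2$ --- which the literal citation, as stated, does not.
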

Consider the event
\begin{align*}
    \mathcal{E} &= \bigg\lbrace d_{\infty}((N(t, m)/t)_{m \in [M]}, \ \mathcal{W}^\star(\boldsymbol{q}_0, \boldsymbol{\mu}_0)) \stackrel{t \to \infty}{\longrightarrow} 0,\\
    &\hspace{3cm} \hat{\boldsymbol{\mu}}(t) \stackrel{t \to \infty}{\longrightarrow} \boldsymbol{\mu}, \
    \hat{\boldsymbol{q}}(t) \stackrel{t \to \infty}{\longrightarrow} \boldsymbol{q}_0 \bigg\rbrace.
\end{align*}
Thanks to Lemma~\ref{lemma:modified-tracking-with-non-unique-optimal-solutions}, we know that $\mathcal{E}$ occurs with probability $1$ under BBMTS.
Fix $\varepsilon>0$ and $w^\star \in \mathcal{W}^\star(q, \mu)$ arbitrarily. From the continuity of $\psi$ (as defined in \eqref{eq:psi-of-mu-and-w}) established in Appendix~\ref{appndx:proof-of-berges-plus-convexity}, we know that there exists an open neighbourhood of $\{\boldsymbol{q}_0\} \times \{\boldsymbol{\mu}_0\} \times \mathcal{W}^\star(\boldsymbol{q}_0, \boldsymbol{\mu}_0)$, say $\mathcal{U}=\mathcal{U}(\varepsilon)$, such that
\begin{equation}
    \psi(\boldsymbol{q}, \boldsymbol{\mu}, w^\prime) \geq (1-\varepsilon)\, \psi(\boldsymbol{q}_0, \boldsymbol{\mu}_0, w^\star) = (1-\varepsilon)\, T^\star(\boldsymbol{q}_0, \boldsymbol{\mu}_0).
\end{equation}
for all $(\boldsymbol{q}, \boldsymbol{\mu}, w^\prime) \in \mathcal{U}$. Under the event $\mathcal{E}$, there exists $n_0 = n_0(\varepsilon)$ such that $(\hat{\boldsymbol{\mu}}(t), \hat{\boldsymbol{q}}(t), (N(t, m)/t)_{m \in [M]}) \in \mathcal{U}$ for all $t \geq n_0$, and consequently
\begin{equation}
    \psi(\hat{\boldsymbol{q}}(t), \hat{\boldsymbol{\mu}}(t), (N(t, m)/t)_{m \in [M]}) \geq (1-\varepsilon)\, T^\star(\boldsymbol{q}_0, \boldsymbol{\mu}_0)
\end{equation}
for all $t \geq n_0$. Because $\|\hat{\boldsymbol{\mu}}(t)-\boldsymbol{\mu}_0\| \stackrel{t \to \infty}{\longrightarrow} 0$ under $\mathcal{E}$, and  $a^\star(\boldsymbol{\mu}_0)$ is unique, there exists $n_1$ large such that $\hat{a}(t) = \arg\max_{a \in [K]} \hat{\mu}_a(t)$ is unique $\forall t \geq n_1$, in which case we may express the test statistic $Z(t)$ as $Z(t) 
    = \min_{b \neq \hat{a}(t)} Z_{\hat{a}(t), b}(t)
    = t \, \psi(\hat{\boldsymbol{q}}(t), \hat{\boldsymbol{\mu}}(t), (N(t, m)/t)_{m \in [M]})$.
Thanks to the sampling rule \eqref{eq:sampling-rule}, we have $\min_{m \in [M]} N(t, m) =\Omega(\sqrt{t})>0$ a.s. for all $t$ large. This, along with the fact that~$\|\hat{\boldsymbol{q}}(t) - \boldsymbol{q}_0\| \stackrel{t\to\infty}{\longrightarrow} 0$ under the event $\mathcal{E}$, implies that there exists $n_2 \geq 1$ such that $\min_{k \in [K]} N_k(t) > 0$ for all $t \geq n_2$. Thus, under $\mathcal{E}$, for all $t \geq \max\{n_0, n_1, n_2\}$, we have
\begin{equation}
    Z(t) \geq t (1-\varepsilon) \, T^\star(\boldsymbol{q}_0, \boldsymbol{\mu}_0), \quad \min_{k \in [K]} N_k(t) > 0.
    \label{eq:proof-of-almost-sure-upper-bound-1}
\end{equation}
We then have (format from here onwards)
\begin{align}
    &\tau_{\pi} = \inf\left\lbrace t \geq 1: Z(t) \geq \zeta(t, \delta, \rho),\ \min_{k \in [K]} N_k(t) > 0 \right\rbrace \nonumber\\
    & \leq \inf\left\lbrace t \geq \max\{n_0, n_1, n_2\}: Z(t) \geq \zeta(t, \delta, \rho) \right\rbrace \nonumber\\ 
    &\leq \inf\bigg\lbrace t \geq \max\{n_0, n_1, n_2\}: t (1-\varepsilon)\, T^\star(\boldsymbol{q}_0, \boldsymbol{\mu}_0) \\
    &\hspace{5cm} \geq \log \frac{C\, t^{1+\rho}}{\delta}\bigg\rbrace \nonumber\\
    &\leq \max\left\lbrace \max\{n_0, n_1, n_2\}, \  \inf\left\lbrace t \geq 1: t c_1 \geq  \log (c_2 t^{1+\rho}) \right\rbrace \right\rbrace \nonumber\\
    &\leq \max\left\lbrace n_0, n_1, n_2,\frac{1}{c_1} \left(\log\frac{c_2 e}{c_1^{1+\rho}} + \log\log \frac{c_2}{c_1^{1+\rho}}\right) \right\rbrace,
    \label{eq:proof-of-almost-sure-upper-bound-2}
\end{align}
for all $\delta$ sufficiently small, where the last line above follows from the application of Lemma \ref{lemma:technical-lemma} with
\begin{equation}
    c_1 = \frac{T^\star(q, \mu) (1-\varepsilon)}{1+\rho}, \quad c_2 = \frac{1}{\delta}, \quad \alpha=1+\rho.
    \label{eq:c-1-and-c-2}
\end{equation}
Eq. \eqref{eq:proof-of-almost-sure-upper-bound-2} immediately yields $\tau_{\pi} < \infty$ almost surely. Dividing by $\log(1/\delta)$ and taking limits as $\delta \downarrow 0$ in \eqref{eq:proof-of-almost-sure-upper-bound-2}, we get
\begin{equation}
    \limsup_{\delta \downarrow 0} \frac{\tau_{\pi}}{\log(1/\delta)} \leq \frac{1+\rho}{T^\star(q, \mu) (1-\varepsilon)} \quad a.s..
    \label{eq:proof-of-almost-sure-upper-bound-3}
\end{equation}
Noting that \eqref{eq:proof-of-almost-sure-upper-bound-3} holds for all $\varepsilon>0$, and taking limits as $\varepsilon \downarrow 0$, we arrive at the desired result.

{\em Proof of part 3:} 
Fix $\varepsilon>0$ and $w^\star \in \mathcal{W}^\star(\boldsymbol{q}_0, \boldsymbol{\mu}_0)$. By virtue of the continuity of $\psi$, there exists $\xi_1(\varepsilon)>0$ such that for all $(\boldsymbol{q}, \boldsymbol{\mu}, w^\prime)$ satisfying
\begin{equation}
    \|\boldsymbol{q}  - \boldsymbol{q}_0\| \leq \xi_1(\varepsilon), \ \|\boldsymbol{\mu} - \boldsymbol{\mu}_0\| \leq \xi_1(\varepsilon), \ d_{\infty}(w^\prime, w^\star) \leq \xi_1(\varepsilon),
    \label{eq:proof-of-upper-bound-expected-value-1}
\end{equation}
we have $|\psi(\boldsymbol{q}, \boldsymbol{\mu}, w^\prime) - \psi(\boldsymbol{q}_0, \boldsymbol{\mu}_0, w^\star)| \leq \varepsilon \, \psi(\boldsymbol{q}_0, \boldsymbol{\mu}_0, w^\star) = \varepsilon \, T^\star(\boldsymbol{q}_0, \boldsymbol{\mu}_0)$. Additionally, from the upper-hemicontinuity of $(\boldsymbol{q}, \boldsymbol{\mu}) \mapsto \mathcal{W}^\star(\boldsymbol{q}, \boldsymbol{\mu})$ (Lemma~\ref{lemma:berges-plus-convexity}), there exists $\xi_2(\varepsilon)>0$ s.t. 
\begin{align*}
    &\|\boldsymbol{q}  - \boldsymbol{q}_0\| \leq \xi_2(\varepsilon),\, \|\boldsymbol{\mu}  - \boldsymbol{\mu}_0\| \leq \xi_2(\varepsilon) \nonumber\\
    &\implies d_{\infty}(w^\prime, \mathcal{W}^\star(\boldsymbol{q}_0, \boldsymbol{\mu}_0)) \leq \frac{\xi_1(\varepsilon)}{10(M-1)} \ \forall w^\prime \in \mathcal{W}^\star(\boldsymbol{q}, \boldsymbol{\mu}).
\end{align*}
Set $\xi(\varepsilon) = \min\{\xi_1(\varepsilon), \xi_2(\varepsilon)\}$. For $N > 1$, let
\begin{equation}
    \mathcal{G}_{N}\!=\!  \bigcap_{t=N-1}^{\infty} \{\|\boldsymbol{q} - \hat{\boldsymbol{q}}(t)\| \leq \xi(\varepsilon), \|\boldsymbol{\mu} - \hat{\boldsymbol{\mu}}(t)\| \leq \xi(\varepsilon)\}.
    \label{eq:event-G-N}
\end{equation}
Under the event $\mathcal{G}_N$, we note that for all $t \geq N$,
\begin{align}
    &d_{\infty}(w(t), \mathcal{W}^\star(\boldsymbol{q}, \boldsymbol{\mu})) \\
    &\leq \max_{w^\prime \in \mathcal{W}^\star(\hat{q}(t-1), \hat{\mu}(t-1))} d_{\infty}(w^\prime, \mathcal{W}^\star(\boldsymbol{q}, \boldsymbol{\mu})) \nonumber\\
    &\leq \frac{\xi_1(\varepsilon)}{10(M-1)}.
\end{align}
Furthermore, defining $\varepsilon_1 \coloneqq \xi_1(\varepsilon)/(10(M-1))$, it follows from Lemma~\ref{lemma:modified-tracking-with-non-unique-optimal-solutions} that there exists $n_1=n_1(\varepsilon_1)$ such that for all $ t \geq n_1$, a.s.,
\begin{equation}
    d_{\infty}((N(t, m)/t)_{m \in [M]}, \ \mathcal{W}^\star(\boldsymbol{q}, \boldsymbol{\mu})) \leq 10(M-1)\varepsilon_1 \leq \xi_1(\varepsilon).
    \label{eq:proof-of-upper-bound-expected-value-2}
\end{equation}
Setting $N_0=n_1$, and using \eqref{eq:proof-of-upper-bound-expected-value-1} and \eqref{eq:proof-of-upper-bound-expected-value-2}, we see that $\mathcal{G}_N$ is a subset of the event
$$
\bigcap_{t=N}^{\infty} \bigg\lbrace \psi(\hat{\boldsymbol{q}}(t), \hat{\boldsymbol{\mu}}(t), (N(t, m)/t)_{m \in [M]}) \geq (1-\varepsilon) T^\star(\boldsymbol{q}, \boldsymbol{\mu}) \bigg\rbrace
$$ 
for all $N \geq N_0$. Let $N_1 = \inf\{t \geq 1: \min_{k \in [K]} N_k(t) > 0\}$. Recall that $Z(t) = t \psi(\hat{\boldsymbol{q}}(t), \hat{\boldsymbol{\mu}}(t), (N(t, m)/t)_{m \in [M]})$ for all $t \geq N_1$. Then, for all $N\geq \max\{N_0,N_1\}$, we have under $\mathcal{G}_N$
\begin{align}
    &\tau_{\pi} = \inf\left\lbrace t \geq 1: Z(t) \geq \zeta(t, \delta, \rho), \ \min_{k \in [K]} N_k(t)>0 \right\rbrace \nonumber\\
    &\leq \inf\left\lbrace t \geq N: Z(t) \geq \zeta(t, \delta, \rho) \right\rbrace \nonumber\\
    & \leq \inf\left\lbrace t \geq N: t\,c_1 \geq \log (c_2\, t) \right\rbrace \nonumber\\
    & \leq \max\left\lbrace N, \ \inf\left\lbrace t \geq 1: c_1 t \geq \log (c_2 t^{1+\rho})\right\rbrace \right\rbrace \nonumber\\
    &\leq \max\left\lbrace N, \  \frac{1+\rho}{c_1} \left(\log\frac{c_2 e}{c_1^{1+\rho}} + \log\log \frac{c_2}{c_1^{1+\rho}}\right) \right\rbrace,
    \label{eq:proof-of-upper-bound-expected-value-4}
\end{align}
where the last line above follows from Lemma~\ref{lemma:technical-lemma}, with $c_1$, $c_2$, and $\alpha$ as in \eqref{eq:c-1-and-c-2}. Also, in \eqref{eq:proof-of-upper-bound-expected-value-4}, we have $\log\log \frac{c_2}{c_1}=o(\log(1/\delta))$. Letting
$$
N_2(\delta, \rho) \coloneqq \frac{1+\rho}{(1-\varepsilon)\, T^\star(q, \mu)} \log\frac{1}{\delta} + o(\log(1/\delta)),
$$
it follows from \eqref{eq:proof-of-upper-bound-expected-value-4} that 
\begin{equation}
    \mathcal{G}_N \subset \{\tau_{\pi} \leq N\} \quad \forall N\geq \max\{N_0, \,  N_1, \, N_2(\delta, \rho)\}.
    \label{eq:subset-relation-of-interest}
\end{equation}
Setting $N_3(\delta, \rho) = \max\{N_0,\, N_1, \, N_2(\delta, \rho)\}$, we then have
\begin{align}
    \tau_{\pi} 
    &= \tau_{\pi}\,\mathbf{1}_{\{\tau_{\pi} \leq N_3(\delta, \rho)\}} + \tau_{\pi}\,\mathbf{1}_{\{\tau_{\pi} > N_3(\delta, \rho)\}} \nonumber\\
    & \leq N_3(\delta, \rho) + \max\{N_3(\delta, \rho), \,  \tau_{\pi}\} \nonumber\\
    &\leq N_2(\delta, \rho) + \max\{N_0, \, N_1\} + \tau,
    \label{eq:proof-of-upper-bound-expected-value-5}
\end{align}
where in writing \eqref{eq:proof-of-upper-bound-expected-value-5}, we use the relation $\max\{a, b, c\} \leq c + \max\{a,b\}$, and $\tau \coloneqq \max\{N_3(\delta, \rho), \,  \tau_{\pi}\}$. Taking expectations on both sides of \eqref{eq:proof-of-upper-bound-expected-value-5}, we get
\begin{equation}
    \mathbb{E}[\tau_{\pi}] \leq N_2(\delta, \rho) + \max\{N_0, \, N_1\} +  \mathbb{E}[\tau].
    \label{eq:proof-of-upper-bound-expected-value-6}
\end{equation}
We now note that
\begin{align}
    \mathbb{E}[\tau] 
    &= \sum_{N=0}^{\infty} P(\max\{N_3(\delta, \rho), \, \tau_{\pi}\} > N) \nonumber\\
    &=\sum_{N=N_3(\delta, \rho)}^{\infty} P(\tau_{\pi} > N) \nonumber\\
    &\leq \sum_{N=N_3(\delta, \rho)}^{\infty} P(\mathcal{G}_N^{\mathsf{c}}) \nonumber\\
    &\leq \sum_{N=\max\{N_0, \, N_1\}}^{\infty} P(\mathcal{G}_N^{\mathsf{c}}),
    \label{eq:proof-of-upper-bound-expected-value-7}
\end{align}
where the penultimate line is due to \eqref{eq:subset-relation-of-interest}. 

We now demonstrate below that the summation in \eqref{eq:proof-of-upper-bound-expected-value-7} is finite and may be upper bounded by a constant that is independent of $\delta$. Indeed, from \eqref{eq:event-G-N}, we have
\begin{align}
    P(\mathcal{G}_N^{\mathsf{c}}) 
    &\leq \sum_{t=N-1}^{\infty} P(\|\hat{\boldsymbol{q}}(t) - \boldsymbol{q}_0\| > \xi(\varepsilon)) \nonumber\\
    &\hspace{1cm}+ \sum_{t=N-1}^{\infty} P(\|\hat{\boldsymbol{\mu}}(t) - \boldsymbol{\mu}_0\| > \xi(\varepsilon)) \nonumber\\
    &\stackrel{(a)}{\leq} \sum_{t=N-1}^{\infty} c_1 \, t^{M/4} \exp(-c_2 \xi(\varepsilon)^2 \sqrt{t}) \nonumber\\
    &\hspace{1cm}+ \sum_{t=N-1}^{\infty} c_3 \, t^{K/4} \exp(-c_4 \xi(\varepsilon)^2 \sqrt{t}) \nonumber\\
    &\leq \int_{N-2}^{\infty} c_1 \, s^{M/4} \exp(-c_2 \xi(\varepsilon)^2 \sqrt{s})\, ds \nonumber\\
    &\hspace{1cm}+ \int_{N-2}^{\infty} c_3 \, s^{K/4} \exp(-c_4 \xi(\varepsilon)^2 \sqrt{s})\, ds \nonumber\\
    &= O\left(\frac{(N-2)^{M/2 + 1}}{\xi(\varepsilon)^4 \, \exp(c_2 \, \xi(\varepsilon)^2 \, \sqrt{N-2})}\right) \nonumber\\
    &\hspace{1cm}+ O\left(\frac{(N-2)^{K/2 + 1}}{\xi(\varepsilon)^4 \, \exp(c_4 \, \xi(\varepsilon)^2 \, \sqrt{N-2})}\right),
    \label{eq:proof-of-upper-bound-expected-value-8}
\end{align}
where $(a)$ follows from standard concentration inequalities for sub-Gaussian random variables \cite{boucheron2013concentration}, and the last line follows by upper bounding the integrals in the penultimate line by incomplete Gamma functions and upper bounding these functions using some classical inequalities \cite{natalini2000inequalities, borwein2009uniform}. Noting that the summation of the right-hand side of \eqref{eq:proof-of-upper-bound-expected-value-8} over $N \geq \max\{N_0, \, N_1\}$ is finite (equal to $b$, say) and independent of $\delta$, using \eqref{eq:proof-of-upper-bound-expected-value-7} and \eqref{eq:proof-of-upper-bound-expected-value-8} in \eqref{eq:proof-of-upper-bound-expected-value-6}, we have 
\begin{align}
    &\frac{\mathbb{E}[\tau_{\pi}]}{\log(1/\delta)} 
    \leq \frac{N_2(\delta, \rho)}{\log(1/\delta)} + \frac{\max\{N_0 \, \varepsilon_1^3, \, N_1\} + b}{\log(1/\delta)} \nonumber\\
    &= \frac{1+\rho}{(1-\varepsilon)\, T^\star(q, \mu)} + \frac{o(\log(1/\delta))+\max\{N_0 \, \varepsilon_1^3, \, N_1\} + b}{\log(1/\delta)}.
\end{align}
Taking limits as $\delta \downarrow 0$ followed by $\varepsilon \downarrow 0$, we arrive at \eqref{eq:expected-upper-bound-BBMTS}.

\section{Proof of Theorem~\ref{thm:BBSEA_performance}}
To prove Theorem \ref{thm:BBSEA_performance}, we use the following concentration inequalities. For the proofs, we refer the reader to \cite{boucheron2013concentration}.
\begin{lemma}
\label{lemma:conc-ineq}
Let $X_1,X_2,\ldots,X_n$ be $n$ i.i.d. sub-Gaussian random variables with mean $\mu$ and variance $\sigma^2$. Then, for any $\epsilon>0$, we have
\begin{equation}
P\left(\left|\frac{\sum_{i=1}^{n}X_i}{n}-\mu \right|> \epsilon\right)\leq 2 \exp \left \lbrace -\frac{ n\epsilon^2}{2\sigma^2} \right \rbrace. \label{eqn:conc}
\end{equation}
\end{lemma}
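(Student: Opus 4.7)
The plan is to prove the two-sided tail bound via a standard Chernoff-type argument, exploiting the defining moment generating function (MGF) bound of sub-Gaussian random variables. Recall that a random variable $Y$ with mean zero is $\sigma^2$-sub-Gaussian if $\mathbb{E}[e^{\lambda Y}] \leq \exp(\lambda^2\sigma^2/2)$ for every $\lambda\in\mathbb{R}$. Writing $Y_i = X_i - \mu$, the hypotheses of the lemma furnish precisely this MGF bound for each $Y_i$, and the $Y_i$ inherit independence from the $X_i$.

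The first step I would take is to control the upper tail $P(\bar{X}-\mu>\epsilon)$, where $\bar{X}=\frac{1}{n}\sum_i X_i$. For any $\lambda>0$, Markov's inequality applied to $e^{\lambda n(\bar{X}-\mu)}$ gives
\begin{equation*}
P(\bar{X}-\mu>\epsilon) \leq e^{-\lambda n\epsilon}\, \mathbb{E}\!\left[e^{\lambda\sum_{i=1}^n Y_i}\right] = e^{-\lambda n\epsilon}\, \prod_{i=1}^n \mathbb{E}[e^{\lambda Y_i}] \leq \exp\!\left(-\lambda n\epsilon + \tfrac{n\lambda^2\sigma^2}{2}\right),
\end{equation*}
where the equality uses independence and the last inequality uses the sub-Gaussian MGF bound. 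Optimising the right-hand side over $\lambda>0$, the minimiser is $\lambda^\star = \epsilon/\sigma^2$, yielding the one-sided bound $P(\bar{X}-\mu>\epsilon)\leq \exp(-n\epsilon^2/(2\sigma^2))$.

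Next I would apply the identical argument to $-Y_i$, which is also $\sigma^2$-sub-Gaussian (the MGF bound is symmetric in $\lambda$), to obtain $P(\bar{X}-\mu<-\epsilon)\leq \exp(-n\epsilon^2/(2\sigma^2))$. A union bound over the two symmetric events then yields
\begin{equation*}
P\!\left(\left|\bar{X}-\mu\right|>\epsilon\right) \leq 2\exp\!\left(-\frac{n\epsilon^2}{2\sigma^2}\right),
\end{equation*}
which is the claimed inequality \eqref{eqn:conc}.

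There is no substantive obstacle: the result is a textbook Chernoff/Hoeffding-style inequality and the paper explicitly defers to \cite{boucheron2013concentration}. The only mild care needed is to ensure that the stated $\sigma^2$ in the lemma's hypothesis is interpreted as the sub-Gaussian proxy variance (so that the MGF bound $\mathbb{E}[e^{\lambda Y}]\leq e^{\lambda^2\sigma^2/2}$ is available); the ``variance $\sigma^2$'' phrasing is standard shorthand for this in the sub-Gaussian literature, and the subsequent optimisation of $\lambda$ is what produces the exact constant $1/(2\sigma^2)$ in the exponent.
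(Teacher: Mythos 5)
Your proof is correct and is precisely the standard Chernoff--Cram\'er argument (centre, bound the MGF via the sub-Gaussian property, optimise $\lambda$, union bound the two tails) that the paper implicitly relies on: the paper gives no proof of this lemma itself and simply defers to the concentration-inequalities reference, where this same argument appears. Your closing caveat---that the stated ``variance $\sigma^2$'' must be read as the sub-Gaussian variance proxy for the MGF bound to be available---is the right one to flag, and is consistent with how the paper uses the lemma (for $1$-sub-Gaussian rewards).
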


\begin{lemma}
\label{lemma:conc-ineq2}
Let $X_1,X_2,\ldots,X_n$ be $n$ i.i.d. Bernoulli random variables with mean $p$. Then, for any $0<\gamma<1$, we have
\begin{equation}
P\left( {\sum_{i=1}^{n}X_i} \leq  (1-\gamma) np \right)\leq \exp\left(-\frac{np \gamma^{2}}{2}\right). \label{eqn:conc2}
\end{equation}
\end{lemma}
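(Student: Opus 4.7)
The plan is to apply the standard Chernoff-bounding technique to the lower tail. First I would introduce a free parameter $t > 0$, rewrite the event as $\{e^{-t\sum_i X_i} \geq e^{-t(1-\gamma)np}\}$, and apply Markov's inequality together with independence to get $P(\sum_{i=1}^{n} X_i \leq (1-\gamma)np) \leq e^{t(1-\gamma)np}\, (E[e^{-tX_1}])^n$. Computing the moment generating factor, $E[e^{-tX_1}] = (1-p) + p e^{-t} = 1 - p(1-e^{-t})$, and applying the elementary inequality $1+x \leq e^x$ with $x = -p(1-e^{-t})$, I obtain $E[e^{-tX_1}] \leq \exp(-p(1-e^{-t}))$, which reduces the Chernoff bound to $\exp(np[t(1-\gamma) - (1-e^{-t})])$.

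Next I would optimize the right-hand side over $t > 0$. Differentiating the exponent in $t$ and setting the derivative to zero yields $e^{-t^\star} = 1-\gamma$, that is, $t^\star = -\ln(1-\gamma)$, which is strictly positive for $\gamma \in (0,1)$, hence an admissible choice. Substituting $t^\star$ back produces the sharp relative-entropy exponent $\phi(\gamma) := (1-\gamma)\ln(1-\gamma) + \gamma$, giving $P(\sum_{i=1}^{n} X_i \leq (1-\gamma)np) \leq \exp(-np\,\phi(\gamma))$.

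It then remains to relax this to the Gaussian-type form in the statement by verifying that $\phi(\gamma) \geq \gamma^2/2$ on $(0,1)$. I would do this via Taylor's theorem: $\phi(0) = 0$ and $\phi'(\gamma) = -\ln(1-\gamma)$ vanishes at $\gamma = 0$, while $\phi''(\gamma) = 1/(1-\gamma) \geq 1$ for all $\gamma \in [0,1)$. Integrating this second-derivative lower bound twice from $0$ to $\gamma$ yields $\phi(\gamma) \geq \gamma^2/2$, and combining with the Chernoff step completes the proof. The only non-mechanical step is the convexity-based lower bound on $\phi$; everything else is textbook MGF manipulation, so I anticipate no real obstacle.
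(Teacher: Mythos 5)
Your proof is correct: the Chernoff--Cram\'{e}r argument, the optimal choice $t^\star = -\ln(1-\gamma)$ giving the exponent $\phi(\gamma) = (1-\gamma)\ln(1-\gamma)+\gamma$, and the relaxation $\phi(\gamma) \geq \gamma^2/2$ via $\phi''(\gamma) = 1/(1-\gamma) \geq 1$ are all sound and complete. The paper offers no proof of this lemma---it cites the standard concentration-inequalities text of Boucheron, Lugosi, and Massart---and your derivation is exactly the textbook argument that reference supplies, so the two approaches coincide.
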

The following result formalises that for all $(m,k)$ pairs, the empirical mean $\hat{\mu}_{m,k}(n)$ of the arm $A_{m,k}$ lies between its upper and lower confidence bounds defined in Section \ref{sec:partition-setting} with high probability.
\begin{lemma}\label{lemma:event_1}
    Let $\mathcal{E}_1$ be the following event:
    \begin{align*}
	\!\!\mathcal{E}_1\! \coloneqq\! \bigcap_{\substack{n\in\mathbb{N}}}\, \bigcap_{m=1}^{M} \bigcap_{k \in \mathcal{A}_m}\{{\rm LCB}_{m,k}(n) \leq  {\mu}_{m,k}(n) \leq  {\rm UCB}_{m,k}(n)\}.
	\label{eq:event-E1}
\end{align*}
Then, $P(\mathcal{E}_1)\geq 1-\frac{\delta}{2}$.
\end{lemma}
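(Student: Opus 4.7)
\textbf{Proof proposal for Lemma~\ref{lemma:event_1}.} The plan is to reduce the event $\mathcal{E}_1$ to a countable union indexed by the number of pulls of each arm, apply the sub-Gaussian concentration bound of Lemma~\ref{lemma:conc-ineq} pointwise, and then sum the resulting tail bounds. The key observation is that although the event $\mathcal{E}_1$ is indexed by the round number $n \in \mathbb{N}$, the quantity $\hat{\mu}_{m,k}(n)$ only depends on $n$ through the total number of pulls $t_{m,k}(n)$ of arm $A_{m,k}$ up to round $n$. Consequently, if we let $\bar{X}_{m,k,t}$ denote the empirical average of the first $t$ pulls of arm $A_{m,k}$, then the bad event for arm $A_{m,k}$, namely the existence of some round $n$ with $|\hat{\mu}_{m,k}(n) - \mu_{m,k}| > \alpha_\delta(t_{m,k}(n))$, is contained in
\begin{equation*}
\bigcup_{t=1}^{\infty} \left\{ |\bar{X}_{m,k,t} - \mu_{m,k}| > \alpha_\delta(t) \right\}.
\end{equation*}

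Next, since each arm is $1$-sub-Gaussian, Lemma~\ref{lemma:conc-ineq} with $\sigma^2 = 1$ and $\epsilon = \alpha_\delta(t) = \sqrt{2\log(8Kt^2/\delta)/t}$ yields
\begin{equation*}
P\left(|\bar{X}_{m,k,t} - \mu_{m,k}| > \alpha_\delta(t)\right) \leq 2\exp\left(-\tfrac{t\,\alpha_\delta(t)^2}{2}\right) = \frac{2\delta}{8Kt^2} = \frac{\delta}{4Kt^2}.
\end{equation*}
Summing over $t \geq 1$, the probability of the bad event for any fixed arm $A_{m,k}$ is at most $\frac{\delta}{4K}\sum_{t=1}^{\infty} t^{-2} = \frac{\pi^2 \delta}{24K}$.

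Finally, I would union-bound over all arms. In the partition setting, $\sum_{m=1}^{M} |\mathcal{A}_m| = K$, so
\begin{equation*}
P(\mathcal{E}_1^{\mathsf{c}}) \leq \sum_{m=1}^{M} \sum_{k \in \mathcal{A}_m} \frac{\pi^2 \delta}{24K} = \frac{\pi^2 \delta}{24} < \frac{\delta}{2},
\end{equation*}
where the last inequality uses $\pi^2 < 12$. This gives $P(\mathcal{E}_1) \geq 1 - \delta/2$, as required. The step that deserves the most care is the reduction from an uncountable union over rounds to a countable union over pull counts; once this reduction is justified, the remainder is a straightforward concentration-plus-union-bound calculation, and the specific choice $\alpha_\delta(x) = \sqrt{2\log(8Kx^2/\delta)/x}$ is precisely what makes the tail sum summable in $t$ and yield the factor $1/(8K)$ needed to close out the union bound.
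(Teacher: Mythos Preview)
Your proof is correct and matches the approach the paper indicates (the paper does not spell out a proof, only notes that it is standard, uses Lemma~\ref{lemma:conc-ineq}, and points to \cite{reddy2022almost} for a template). One cosmetic slip: in your closing remark you refer to ``an uncountable union over rounds,'' but the round index $n$ already ranges over $\mathbb{N}$, so the reduction is from one countable union (over rounds) to another (over pull counts); the point of the reduction is not cardinality but rather that the deviation event for a fixed pull count $t$ involves only the first $t$ i.i.d.\ rewards from that arm, so Lemma~\ref{lemma:conc-ineq} applies directly.
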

The proof of Lemma~\ref{lemma:event_1} uses Lemma~\ref{lemma:conc-ineq} and is quite standard; see, for instance, \cite[Lemma 2]{reddy2022almost} for a proof template.




\begin{lemma}\label{lemma:no_of_arm_polls}
    Conditioned on $\mathcal{E}_1$, the following hold w.p.1.
    \begin{itemize}
        \item BBSEA outputs the correct best arm.
        \item $T_{m,k} \leq \alpha_{m,k}$ for all $m \in [M]$ and $k\in \mathcal{A}_m$.
    \end{itemize}
\end{lemma}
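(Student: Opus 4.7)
The plan is to verify the two claims by exploiting the central property of the good event $\mathcal{E}_1$, namely that on $\mathcal{E}_1$, the true mean of every arm lies sandwiched between its running LCB and UCB for every round $n$. Both parts of the lemma then follow by comparing the deterministic width $\alpha_\delta(t_{m,k}(n))$ of the confidence intervals against the sub-optimality gaps $\Delta_{m,k}$, after noting that by construction of BBSEA every active arm $A_{m,k}$ has been pulled at least $n$ times by the end of round $n$, i.e., $t_{m,k}(n) \ge n$.

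For part (1), I would argue that on $\mathcal{E}_1$ the best arm $A_{1,1}$ is never eliminated. Indeed, fix any $(m,k) \neq (1,1)$ and any round $n$. On $\mathcal{E}_1$, ${\rm LCB}_{m,k}(n) \le \mu_{m,k}^0 \le \mu_{1,1}^0 \le {\rm UCB}_{1,1}(n)$, so the inequality $ {\rm UCB}_{1,1}(n) < {\rm LCB}_{m,k}(n)$ (which is the sole elimination trigger in line 5 of Algorithm~\ref{algo:non-asymptotic-1}) can never occur. Hence $A_{1,1}$ remains active throughout, and when the algorithm terminates with $|S|=1$ the unique surviving arm must be $A_{1,1}$.

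For part (2), I would fix a suboptimal $(m,k)$ and a round $n \le \alpha_{m,k}$ in which $A_{m,k}$ is still active. Since $A_{1,1}$ stays active by part~(1), both arms have been pulled at least $n$ times, so $t_{m,k}(n),\,t_{1,1}(n) \ge n$. Using $\mathcal{E}_1$ twice,
\begin{align*}
{\rm LCB}_{1,1}(n) - {\rm UCB}_{m,k}(n)
&\ge \mu_{1,1}^0 - \mu_{m,k}^0 - 2\alpha_\delta(t_{1,1}(n)) - 2\alpha_\delta(t_{m,k}(n))\\
&\ge \Delta_{m,k} - 4\alpha_\delta(n),
\end{align*}
where the last step uses that $\alpha_\delta(x)$ is eventually decreasing. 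Hence it suffices to show that at $n = \alpha_{m,k}$ one has $4\alpha_\delta(\alpha_{m,k}) \le \Delta_{m,k}$, i.e., $\frac{32\log(8K\alpha_{m,k}^2/\delta)}{\alpha_{m,k}} \le \Delta_{m,k}^2$. Plugging in the explicit value of $\alpha_{m,k}$ reduces this to a logarithmic inequality in $\Delta_{m,k}, K, \delta$, which is the key computational step.

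The main obstacle is the last step: verifying the self-referential bound $4\alpha_\delta(\alpha_{m,k}) \le \Delta_{m,k}$ for the particular constant $102$ and the extra additive $1$ that appear in the definition of $\alpha_{m,k}$. I would handle it by writing $\alpha_{m,k} = 1 + c/\Delta_{m,k}^2 \cdot \log(64\sqrt{8K/\delta}/\Delta_{m,k}^2)$ with $c = 102$, bounding $\log(8K\alpha_{m,k}^2/\delta) \le 2\log(8K/\delta) + 2\log\alpha_{m,k}$, and absorbing the $\log\log$-type term in $\log\alpha_{m,k}$ into the leading term using the additive $+1$ and the factor $102$ -- standard manipulations, but technically the most delicate part of the argument. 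Once this is done, the elimination condition of BBSEA fires at some round $n \le \alpha_{m,k}$, giving $T_{m,k} \le \alpha_{m,k}$ as claimed.
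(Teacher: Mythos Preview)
Your proposal is correct and follows precisely the standard successive-elimination analysis that the paper itself declines to spell out (it simply cites \cite[Theorems~1 and~2]{reddy2022almost} as a template). The two ingredients you use---(i) on $\mathcal{E}_1$ the best arm cannot be eliminated, and (ii) once both confidence radii drop below $\Delta_{m,k}/4$ the elimination test fires---are exactly the intended argument, and your identification of the constant check $4\alpha_\delta(\alpha_{m,k}) < \Delta_{m,k}$ as the only genuinely computational step is accurate. Two minor clean-ups: in your expansion you should have $\log(8K\alpha_{m,k}^2/\delta)=\log(8K/\delta)+2\log\alpha_{m,k}$ rather than $2\log(8K/\delta)+2\log\alpha_{m,k}$; and the case $(m,k)=(1,1)$ (with $\Delta_{1,1}=\min_{(m',k')\neq(1,1)}\Delta_{m',k'}$) should be mentioned explicitly, since $T_{1,1}$ is the termination round and $T_{1,1}\le\alpha_{1,1}$ follows immediately once all sub-optimal arms are shown to be eliminated by their respective $\alpha_{m,k}\le\alpha_{1,1}$.
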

Lemma~\ref{lemma:no_of_arm_polls} informs that (a) as long as the arm means lie within their respective confidence intervals, BBSEA gives correct output, and (b) arm $A_{m,k}$ is identified as optimal or sub-optimal in at most $\alpha_{m,k}$ rounds of BBSEA. Again, the proof of Lemma~\ref{lemma:no_of_arm_polls} is quite standard and omitted; see, for instance, \cite[Theorem 1]{reddy2022almost} and \cite[Theorem 2]{reddy2022almost} for a proof template.

\begin{lemma}\label{lemma:event_2}
    Let $\mathcal{E}_2$ be the event that $\beta_{m,k}$ selections of box $m$ gives rise to at least $\alpha_{m,k}$ pulls of arm $A_{m,k}$ for all $m \in [M]$ and $k \in \mathcal{A}_m$. Then, $P(\mathcal{E}_2)\geq 1-\frac{\delta}{2}$.
\end{lemma}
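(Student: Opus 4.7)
The plan is to fix a single arm $A_{m,k}$, analyse it with Chernoff's bound (Lemma~\ref{lemma:conc-ineq2}), and then union bound over the $K$ arm-box pairs (recall the arms are partitioned, so $\sum_m |\mathcal{A}_m| = K$). Conditioning on box $m$ being selected, the indicator that arm $A_{m,k}$ gets pulled is a Bernoulli$(q_{m,k}^0)$ random variable, independently across selections. Thus if $X_1, X_2, \ldots$ are i.i.d.\ Bernoulli$(q_{m,k}^0)$, the event that $\beta_{m,k}$ selections of box $m$ yield at least $\alpha_{m,k}$ pulls of $A_{m,k}$ is exactly $\{\sum_{i=1}^{\beta_{m,k}} X_i \geq \alpha_{m,k}\}$, and it suffices to bound the complement by $\delta/(2K)$ uniformly in $(m,k)$.

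For the individual bound, set $n = \beta_{m,k}$, $q = q_{m,k}^0$, and pick $\gamma \in (0,1)$ so that $(1-\gamma)\, n q = \alpha_{m,k}$, which gives $\gamma = (nq - \alpha_{m,k})/(nq)$. Writing $L \coloneqq \log(2K/\delta)$, the definition of $\beta_{m,k}$ yields $n q = \alpha_{m,k} + 2L + 2\sqrt{L(L+\alpha_{m,k})}$. The key algebraic step is to verify
\begin{equation*}
    \frac{(nq - \alpha_{m,k})^2}{nq} = \frac{\bigl(2L + 2\sqrt{L(L+\alpha_{m,k})}\bigr)^2}{\alpha_{m,k} + 2L + 2\sqrt{L(L+\alpha_{m,k})}} = 4L,
\end{equation*}
which follows by expanding the numerator as $4L\bigl(\alpha_{m,k} + 2L + 2\sqrt{L(L+\alpha_{m,k})}\bigr)$ and cancelling. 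Hence $nq\gamma^2 / 2 = 2L$, and Lemma~\ref{lemma:conc-ineq2} gives
\begin{equation*}
    P\!\left(\sum_{i=1}^{\beta_{m,k}} X_i < \alpha_{m,k}\right) \leq \exp\!\bigl(-nq\gamma^2/2\bigr) = e^{-2L} = \left(\frac{\delta}{2K}\right)^{\!2} \leq \frac{\delta}{2K}.
\end{equation*}

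Finally, a union bound over all $(m,k)$ with $k \in \mathcal{A}_m$ and $m \in [M]$ — of which there are exactly $K$ by the partition assumption — gives $P(\mathcal{E}_2^{\mathsf{c}}) \leq K \cdot \delta/(2K) = \delta/2$, as required. There is no real obstacle here; the only substantive step is the algebraic identity above, which reveals that the somewhat unwieldy definition of $\beta_{m,k}$ is precisely calibrated (in fact slightly conservative) so that the Chernoff exponent matches $2L$. The use of $\log(2K/\delta)$ rather than $\log(K/\delta)$ in $\beta_{m,k}$ provides exactly the budget needed for the union bound to yield $\delta/2$ (with the squaring giving additional slack that is not exploited).
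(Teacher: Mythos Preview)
Your proof is correct and follows essentially the same approach as the paper's: fix $(m,k)$, model the pull indicators as i.i.d.\ Bernoulli$(q_{m,k}^0)$, apply the multiplicative Chernoff bound of Lemma~\ref{lemma:conc-ineq2} with $n=\beta_{m,k}$ and $\gamma$ chosen so that $(1-\gamma)nq_{m,k}^0=\alpha_{m,k}$, and then union bound over the $K$ pairs. Your explicit verification of the algebraic identity $(nq-\alpha_{m,k})^2/(nq)=4L$ is a nice addition that the paper omits, and your observation that the resulting bound is actually $(\delta/(2K))^2\le \delta/(2K)$ correctly identifies the slack in the stated constants.
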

\begin{IEEEproof}[Proof of Lemma~\ref{lemma:event_2}]
Recall from our problem setup that when the underlying instance is $(\boldsymbol{q}_0, \boldsymbol{\mu}_0)$, selection of box $m$ leads to the pulling of arm $A_{m,k}$ with probability $q_{m,k}^{0}$. Fix $(m,k)$, and define
\begin{align*}
    X_i=\begin{cases}
        1, \!&\! A_{m,k} \text{ is pulled at the }i \text{th selection of box }m, \\
        0, \!&\! A_{m,k} \text{ is not pulled at the }i \text{th selection of box }m.
    \end{cases}
\end{align*}
Clearly, $X_1, X_2, \ldots$ are i.i.d. and follow Bernoulli distribution with mean~$q_{m,k}^{0}$. Applying Lemma~\ref{lemma:conc-ineq2} with $n=\beta_{m,k}$, 
\begin{align*}
    & \gamma = \frac{2 \sqrt{\ln\left(\frac{2K}{\delta}\right)}}{\sqrt{\ln\left(\frac{2K}{\delta}\right)+\alpha_{m,k}}+\sqrt{\ln\left(\frac{2K}{\delta}\right)}},
\end{align*}
and $p=q_{m,k}^0$, we get that
$\beta_{m,k}$ selections of box $m$  gives rise to $\alpha_{m,k}$ pulls of arm $A_{m,k}$ with probability greater than $1-\frac{\delta
}{2K}.$ The proof is completed by applying union bound over all $(m,k)$ pairs, and noting that $\sum_{m=1}^{M} |\mathcal{A}_m| = K$.
\end{IEEEproof}

With the above ingredients in place, we proceed to prove Theorem~\ref{thm:BBSEA_performance}. Note that 
$
P(\mathcal{E}_1 \cap \mathcal{E}_2) \geq 1-\delta.
$
Part~1 of Theorem~\ref{thm:BBSEA_performance} follows directly from Lemma~\ref{lemma:no_of_arm_polls}.
To prove the second part, let $U_{m,k}^\prime$ denote the random number of selections of box $m$ that are required to pull arm $A_{m,k}$ at least $T_{m,k}$ times. Let $U_m^\prime \coloneqq \max_{k \in \mathcal{A}_m} U_{m,k}^\prime$. 
\begin{lemma}
\label{lemma:final}
    On the event $\mathcal{E}_1 \cap \mathcal{E}_2$, the following hold for all $m \in [M]$ and $k \in \mathcal{A}_m$ w.p.1.
    \begin{enumerate}
        \item $U'_{m,k}\leq \beta_{m,k}$.
        
        \item $U'_m\leq \beta_m$.
    \end{enumerate}
\end{lemma}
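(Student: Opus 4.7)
The proof plan is essentially to chain together Lemma \ref{lemma:no_of_arm_polls} (which controls $T_{m,k}$ on $\mathcal{E}_1$) with Lemma \ref{lemma:event_2} (which controls how many box selections suffice to achieve a target number of arm pulls on $\mathcal{E}_2$), and then take a maximum over $k$ for part 2.

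First I would fix $m \in [M]$ and $k \in \mathcal{A}_m$ and work under the event $\mathcal{E}_1 \cap \mathcal{E}_2$. By Lemma~\ref{lemma:no_of_arm_polls}, on $\mathcal{E}_1$ we have $T_{m,k} \leq \alpha_{m,k}$ almost surely. Hence, the number of selections of box $m$ required to pull $A_{m,k}$ at least $T_{m,k}$ times is bounded above by the number of selections required to pull $A_{m,k}$ at least $\alpha_{m,k}$ times, because the latter is a monotone target. By Lemma~\ref{lemma:event_2}, on $\mathcal{E}_2$ the event ``$\beta_{m,k}$ selections of box $m$ produce at least $\alpha_{m,k}$ pulls of $A_{m,k}$'' holds for every $(m,k)$, so this second quantity is at most $\beta_{m,k}$. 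Combining the two bounds gives $U'_{m,k} \leq \beta_{m,k}$, which is part~1.

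Part~2 is immediate from part~1 by taking a maximum over $k \in \mathcal{A}_m$: on $\mathcal{E}_1 \cap \mathcal{E}_2$,
\[
U'_m \;=\; \max_{k \in \mathcal{A}_m} U'_{m,k} \;\leq\; \max_{k \in \mathcal{A}_m} \beta_{m,k} \;=\; \beta_m,
\]
which is the desired claim.

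There is no serious obstacle here: the statement is a bookkeeping consequence of the two preceding lemmas, and the only point that requires brief care is the monotonicity observation that if $T_{m,k} \leq \alpha_{m,k}$, then the hitting time of $T_{m,k}$ pulls of $A_{m,k}$ (in box-$m$ selections) is dominated by the hitting time of $\alpha_{m,k}$ pulls. I would also note explicitly at the start of the proof that $P(\mathcal{E}_1 \cap \mathcal{E}_2) \geq 1 - \delta$ via a union bound on Lemmas~\ref{lemma:event_1} and \ref{lemma:event_2}, so that the ``w.p.1'' qualifier on $\mathcal{E}_1 \cap \mathcal{E}_2$ is unambiguous, although this is not strictly needed for the statement being proved.
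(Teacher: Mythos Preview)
Your proposal is correct and follows essentially the same approach as the paper: chain $T_{m,k}\le\alpha_{m,k}$ on $\mathcal{E}_1$ (Lemma~\ref{lemma:no_of_arm_polls}) with the guarantee on $\mathcal{E}_2$ that $\beta_{m,k}$ box-$m$ selections yield at least $\alpha_{m,k}$ pulls of $A_{m,k}$ (Lemma~\ref{lemma:event_2}) to get $U'_{m,k}\le\beta_{m,k}$, then take $\max_{k\in\mathcal{A}_m}$ for part~2. The explicit monotonicity remark you include is a helpful clarification the paper leaves implicit.
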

\begin{IEEEproof}[Proof of Lemma~\ref{lemma:final}]
    Recall that under the event $\mathcal{E}_2$, $\beta_{m,k}$ selections of box $m$  gives rise to $\alpha_{m,k}$ pulls of arm $A_{m,k}$ for all $(m,k)$. Using this and the fact that $T_{m,k}\leq \alpha_{m,k}$ for all $(m,k)$ under $\mathcal{E}_1$ (Lemma~\ref{lemma:no_of_arm_polls}), we deduce that $U_{m,k}^\prime \leq \beta_{m,k}$, thus proving part~1. Part~2 simply follows by applying maximum over $k \in \mathcal{A}_m$ to the result in part~2 and using the definitions of $U_m^\prime$ and $\beta_m$ for each $m \in [M]$.
\end{IEEEproof}
Parts 2 of Theorem~\ref{thm:BBSEA_performance} follows immediately from Lemma~\ref{lemma:final} and the fact that the stopping time of BBSEA (which is equal to the total number of box selections under BBSEA) is equal to $\sum_{m=1}^{M} U_m^{\prime}$. Part~3 of Theorem~\ref{thm:BBSEA_performance} follows from standard change-of-measure arguments for multi-armed bandits \cite[Lemma 1]{kaufmann2016complexity}. This completes the proof.



\end{document}